\def\eqref#1{equation~\ref{#1}}
\def\1{\bm{1}}
\DeclareMathAlphabet{\mathsfit}{\encodingdefault}{\sfdefault}{m}{sl}
\SetMathAlphabet{\mathsfit}{bold}{\encodingdefault}{\sfdefault}{bx}{n}
\newtheorem{theorem}{Theorem}
\newtheorem{lemma}[theorem]{Lemma}
\newcommand{\betterparagraph}[1]{\vspace{2pt} \noindent \textbf{#1.~}}
\DeclarePairedDelimiter\abs{\lvert}{\rvert}%
\DeclarePairedDelimiter\norm{\lVert}{\rVert}%
\let\oldabs\abs
\def\abs{\@ifstar{\oldabs}{\oldabs*}}
\let\oldnorm\norm
\def\norm{\@ifstar{\oldnorm}{\oldnorm*}}
\newcommand{\myDots}{\ifmmode\mathinner{\ldotp\kern-0.2em\ldotp\kern-0.2em\ldotp}\else.\kern-0.13em.\kern-0.13em.\fi}
\newcommand{\ltop}{\widehat{\boldsymbol{\lambda}}}
\newcommand{\Vtop}{\widehat{V}}
\newcommand{\diag}[1]{\operatorname{diag}(#1)}
\newcommand{\uvec}{\mathbf{u}}
\newcommand{\vvec}{\mathbf{v}}
\newcommand{\qvec}{\mathbf{q}}
\newcommand{\lambdavec}{\boldsymbol{\lambda}}
\DeclareRobustCommand\widecheck[1]{{\mathpalette\@widecheck{#1}}}
\def\@widecheck#1#2{%
    \setbox\z@\hbox{\m@th$#1#2$}%
    \setbox\tw@\hbox{\m@th$#1%
       \widehat{%
          \vrule\@width\z@\@height\ht\z@
          \vrule\@height\z@\@width\wd\z@}$}%
    \dp\tw@-\ht\z@
    \@tempdima\ht\z@ \advance\@tempdima2\ht\tw@ \divide\@tempdima\thr@@
    \setbox\tw@\hbox{%
       \raise\@tempdima\hbox{\scalebox{1}[-1]{\lower\@tempdima\box
\tw@}}}%
    {\ooalign{\box\tw@ \cr \box\z@}}}
\newcommand{\Vbottom}{\widecheck{V}}
\newcommand{\lbottom}{\widecheck{\boldsymbol{\lambda}}}
\title{FOSI: Hybrid First and Second Order Optimization}
\author{Hadar Sivan \\
Technion \\
Haifa, Israel \\
\texttt{hadarsivan@cs.technion.ac.il} \\
\And
Moshe Gabel \\
York University \\
Toronto, Canada \\
\texttt{mgabel@yorku.ca} \\
\And
Assaf Schuster \\
Technion \\
Haifa, Israel \\
\texttt{assaf@cs.technion.ac.il}
}
\begin{document}

\maketitle

\begin{abstract}

Popular machine learning approaches forgo second-order information due to the difficulty of computing curvature in high dimensions.
We present FOSI, a novel meta-algorithm that improves the performance of any base first-order optimizer by efficiently incorporating second-order information during the optimization process.
In each iteration, FOSI implicitly splits the function into two quadratic functions defined on orthogonal subspaces, then uses a second-order method to minimize the first, and the base optimizer to minimize the other.
We formally analyze FOSI's convergence and the conditions under which it improves a base optimizer.
Our empirical evaluation demonstrates that FOSI improves the convergence rate and optimization time of first-order methods such as Heavy-Ball and Adam, and outperforms second-order methods (K-FAC and L-BFGS).

\end{abstract}

\section{Introduction} \label{sec:introduction}
Consider the optimization problem $\min_{\theta} f(\theta)$ for a twice differential function $f: \mathbb{R}^n \rightarrow \mathbb{R}$.
First-order optimizers such as gradient descent (GD) use only the gradient information to update $\theta$ \citep{adam_2014,tieleman2012lecture,10.5555/1953048.2021068,polyak1987introduction,nesterov2003introductory}.
Conversely, second-order optimizers such as Newton's method update $\theta$ using both the gradient and the Hessian information.
First-order optimizers are thus more computationally efficient as they only require evaluating and storing the gradient, and since their update step often involves only  element-wise operations, but have a lower convergence rate compared to second-order optimizers in many settings~\citep{Tan_2019}.
Unfortunately, second-order optimizers cannot be used for large-scale optimization problems such as deep neural networks (DNNs) due to the intractability of evaluating the Hessian when the dimension $n$ is large.

Despite recent work on hybrid optimizers that leverage second-order information without computing the entire Hessian~\citep{henriques2019small,10.5555/3045118.3045374,pmlr-v80-gupta18a,10.5555/3495724.3495925, sivan2022automon}, 
first-order methods remain the preferred choice for two reasons.
First, many hybrid methods approximate the Hessian rather than the inverse preconditioner directly, resulting in amplifying approximation error and noise~\citep{li2017preconditioned}.
Second, no single optimizer is best across all problems: the performance of an optimizer can depend on the specific characteristics of the problem it is being applied to~\citep{nocedal1999numerical,wilson2017marginal,zhou2020towards}.

\betterparagraph{Our Contributions}
We propose FOSI (for \textbf{F}irst-\textbf{O}rder and \textbf{S}econd-order \textbf{I}ntegration), an alternative approach. Rather than creating a completely new optimizer, FOSI improves the convergence of any base first-order optimizer by incorporating second-order information.
FOSI iteratively splits $\min_\theta f(\theta)$ into pairs of quadratic problems on orthogonal subspaces, then uses Newton's method to optimize one and the base optimizer to optimize the other.
Unlike prior approaches, FOSI: 
(a) estimates the inverse preconditioner directly, reducing errors due to matrix inversion;
(b) only estimates the most extreme eignenvalues and vectors, making it more robust to noise; 
(c) has low and controllable overhead; 
(d) accepts a base first-order optimizer, making it well suited for a large variety of tasks;
and (e) works as ``turn key'' replacement for the base optimizer without requiring additional tuning.
We make the following contributions:
\begin{itemize}[nosep,noitemsep,leftmargin=*]
    \item A detailed description of the FOSI algorithm and a thorough spectral analysis of its preconditioner.
    We prove FOSI converges under common assumptions, and 
    that it improves the condition number of the problem for a large family of base optimizers.
    \item An empirical evaluation of FOSI on common DNN training tasks with standard datasets, showing it improves over popular first-order optimizers in terms of convergence and wall time. 
    The best FOSI optimizer achieves the same loss as the best first-order algorithm in 48\%--77\% of the wall time, depending on the task.
    We also use quadratic functions to explore different features of FOSI, showing it significantly improves convergence of base optimizers when optimizing ill-conditioned functions with non-diagonally dominant Hessians.
    \item An open source implementation of FOSI, available at: \url{https://github.com/hsivan/fosi}.
\end{itemize}

%

\section{Background and Notation} \label{sec:background}

Given $\theta_t$, the parameter vector at iteration $t$,
second-order methods incorporate both the gradient $g_t = \nabla f (\theta_t)$ and the Hessian $H_t = \nabla^2 f(\theta_t)$ in the update step, while first-order methods use only the gradient.
These algorithms typically employ an update step of the form $\theta_{t+1} = \theta_t + d_t$, where $d_t$ is a descent direction determined by the information (first and/or second order) from current and previous iterations.
Usually, $d_t$ is of the form $- \eta P_t^{-1} \bar{g}_t $, where $P_t$ is a preconditioner matrix, $\bar{g}_t$ is a linear combination of current and past gradients, and $\eta > 0$ is a learning rate.
This results in an effective condition number of the problem given by the condition number of $P_t^{-1} H_t$, which ideally is smaller than that of $H_t$~\citep{preconditioning_2002}.
Note that in Newton’s method $P_t=H_t$, resulting in the ideal effective condition number of 1.

Since evaluating $H_t$ for large $n$ is intractable, most prior work approximate it.
This, however, results in amplification of approximation errors and gradient noise due to the need of computing the inverse $P_t^{-1}$~\citep{li2017preconditioned};
techniques such as damping~\citep{10.5555/3045118.3045374} that artificially modify the approximated Hessian further increase the error.
As we will later show, FOSI approximates $P_t^{-1}$ directly, which avoids the error amplification induced by matrix inversion.

\betterparagraph{The Lanczos algorithm}
To obtain information about the curvature of a function $f$ without computing its entire Hessian,
we use the Lanczos algroithm~\citeyearpar{lanczos1950iteration}. 
This is an iterative method that finds the $m$ extreme eigenvalues and eigenvectors of a symmetric matrix $A \in \mathbb{R}^{n \times n}$, where $m$ is usually much smaller than $n$.
After running $m$ iterations, its output is a matrix $U \in \mathbb{R}^{n \times m}$ with orthonormal columns and a tridiagonal real symmetric matrix $T \in \mathbb{R}^{m \times m}$ which can then be used to extract the approximate eigenvalues and eigenvectors of $A$.

The Lanczos approximation is more accurate for more extreme eigenvalues, thus to accurately approximate the $k$ largest and $\ell$ smallest eigenvalues, $m$ must be larger than $k+\ell$.
Crucially, the Lanczos algorithm does not require storing $A$ explicitly. It only requires an operator that receives a vector $\vvec$ and computes the matrix-vector product $A\vvec$.
In our case, $A$ is the Hessian $H_t$ of $f(\theta)$ at the point $\theta_t$.
We denote by $hvp_t(\vvec): \mathbb{R}^n \rightarrow \mathbb{R}$ the operator that returns the Hessian vector product $H_t \vvec$.
This operator can be evaluated in linear time (roughly two approximations of $f$'s gradient), using Pearlmutter’s algorithm~\citeyearpar{pearlmutter_19994}.

\betterparagraph{Notations and definitions}
We use $\diag{\vvec}$ for a diagonal matrix with diagonal $\vvec$, $\mathbf{0}_m$ or $\mathbf{1}_m$ for a row vector of zeros or ones of size $m$, and $[A, B]$ for concatenating two matrices $n \times m_1$, $n \times m_2$ into a single $n \times (m_1 + m_2)$ matrix.
We also define several notations w.r.t a real symmetric matrix $A$ with eigenvalues $\lambda_1 > \myDots > \lambda_n$ and eigenvectors $\vvec_1, \myDots, \vvec_n$.
Let $\ltop$  be the row vector with entries $\lambda_1, \myDots, \lambda_k$ and $\lambda_{n - \ell + 1}, \myDots, \lambda_n$ (the $k$ largest and $\ell$ smallest eigenvalues),
and $\Vtop \in \mathbb{R}^{n \times k+\ell}$ the corresponding matrix whose columns are the eigenvectors of the eigenvalues in $\ltop$.
Similarly, $\lbottom$ is the row vector $[\lambda_{k+1}, \myDots, \lambda_{n - \ell}]$ and
$\Vbottom \in \mathbb{R}^{n \times n - k - \ell}$ is the corresponding matrix of eigenvectors.

\section{First and Second-Order Integration} \label{sec:method}

FOSI is a hybrid method that combines a first-order base optimizer with Newton's method by utilizing each to operate on a distinct subspace of the problem.
The Lanczos algorithm, which provides curvature information of a function, is at the core of FOSI.
We first provide an algorithm for approximating extreme eigenvalues and eigenvectors (\S\ref{sec:lanczos}).
We next present FOSI (\S\ref{sec:fosi}), analyze its preconditioner (\S\ref{sec:diagonal-prec}),
discuss use of momentum (\S\ref{sec:momentum}),
and analyze convergence in stochastic settings such as DNN training (\S\ref{sec:stochastic}).
We then discuss support for closed-form learning rates (\S\ref{sec:automatic_lr_scaling}), reducing spectrum estimation error, and FOSI's overhead (\S\ref{sec:implementation}).

\subsection{Extreme Spectrum Estimation (ESE)} \label{sec:lanczos}
FOSI uses the Lanczos algorithm to estimate the extreme eigenvalues and vectors of the Hessian $H_t$.
Recently, \citet{doi:10.1137/20M1331470} presented probabilistic upper and lower bounds on the relative error of this approximation for arbitrary eigenvalues.
While the upper bound is dependent on the true eigenvalues of $H_t$, which is unknown, the lower bound is dependent solely on $m$ and $n$.
To maintain the lower bound small, it is necessary to set $m$ such that $m = \Theta (\ln n)$ and $m$ must be greater than $k + \ell$.
We thus define a heuristic for determining $m$:
$ m = \max \{ 4 (k + \ell), 2 \ln n \}. $

We now describe the ESE procedure for obtaining the $k$ largest and $\ell$ smallest eigenvalues of $H_t$ and their eigenvectors using Lanczos.
ESE takes as input the function $f$ and its parameter value $\theta_t$, and uses them to define a Hessian-vector product operator $hvp_t$.
Next, it calls the Lanczos algorithm with a specified number of iterations, $m$, and the $hvp_t$ operator.
Our implementation parallelizes $hvp_t$ computations across the batch dimension, since they involve gradient computation, and performs full orthogonalization w.r.t all previous vectors in each iteration to prevent numerical instability~\citep{meurant2006lanczos}.
Finally, ESE extracts the desired eigenvalues and eigenvectors from Lanczos's outputs.
The steps are summarized as Algorithm~\ref{algo:lanczos} in the Supplementary Material (Appendix~\ref{appendix:the_ese_algorithm}).

\subsection{The FOSI Optimizer} \label{sec:fosi}

FOSI takes as input the base optimizer, the function to be optimized, and an initial point, then performs iterative updates until convergence is reached.
In each iteration $t$, FOSI computes the gradient $g_t=\nabla f (\theta_t)$, potentially updates the spectrum estimation, then uses both to update $\theta_t$.

FOSI calls the ESE procedure every $T \ge 1$ iterations to obtain $\ltop$ and $\Vtop$, the largest $k$ and smallest $\ell$ eigenvalues of $H_t$ and their eigenvectors, and then computes $\uvec = 1 / \abs*{\ltop}$ using element-wise absolute values.
To avoid approximation errors, we postpone the first invocation of the ESE procedure by $W$ warmup iterations (we discuss this further in \S\ref{sec:implementation})
During these iterations, the updates are equivalent to those of the base optimizer, as $\uvec$ and $\Vtop$ are initialized as zeros.

Next, FOSI updates $\theta_t$ using the following procedure:
\begin{enumerate}[noitemsep,nosep,leftmargin=*]

    \item Compute the sum of $g_t$'s projections on $\Vtop$'s columns, $g_1 = \Vtop ( \Vtop^T g_t )$, and the sum of $g_t$'s projections on $\Vbottom$'s columns, $g_2 = g_t - g_1$.
    Due to the orthogonality of the eigenvectors, $g_1$ and $g_2$ are also orthogonal to each other.
    
    \item Compute the descent direction $d_1 = - \alpha \Vtop ( ( \Vtop^T g_1 ) \odot \uvec^T )$, where $\odot$ stands for the Hadamard product.
    While the chance of encountering an eigenvalue that is exactly or nearly 0 when using small $k$ and $\ell$ values is very small, it is common to add a small epsilon to $\abs*{\ltop}$ to avoid division by such values~\citep{adam_2014} when computing $\uvec$.
    Note that an equivalent computation to $d_1$ is $- \alpha \Vtop \diag{\uvec} \Vtop^T g_t$, which is an $\alpha$-scaled Newton's method step that is limited to $\Vtop$ subspace.
    The resulting $d_1$ is a linear combination of $\Vtop$'s columns.

    \item Call the base optimizer to compute a descent direction from $g_2$, denoted by $d_b$.

    \item Subtract from $d_b$ its projection on $\Vtop$'s columns, $d_2 = d_b - \Vtop ( \Vtop^T d_b )$.
    The new vector $d_2$ is orthogonal to $\Vtop$'s columns, hence also to $d_1$.

    \item Update the parameters: $\theta_{t + 1} = \theta_t + d_1 + d_2$
    
\end{enumerate}
Parentheses in the above steps are important as they allow for only matrix-vector products, reducing computational complexity.
Appendix~\ref{appendix:fosi_algorithm} provides the full pseudocode for FOSI.

\betterparagraph{Splitting to Two Subspaces}
For clarity, we define $\omega=\theta_t-\theta$, $H_1 = \Vtop \diag{\ltop} \Vtop^T$, and $H_2 = \Vbottom \diag{\lbottom} {\Vbottom}^T$.
Then at each iteration $t$, FOSI implicitly uses the quadratic approximation $\Tilde{f} = f_t + \omega^T g_t + \frac{1}{2}\omega^T H_t \omega$ of $f$ and performs a step to minimize $\Tilde{f}$ as follows.
It first divides the vector space that is the eigenvectors of $H_t$ into two orthogonal complement subspaces -- one is spanned by $\Vtop$'s columns and the other by $\Vbottom$.
It then implicitly splits $\Tilde{f}$ into two quadratic functions $f_1$ and $f_2$ such that $\Tilde{f}$ is their sum:
$f_1 = \frac{1}{2} f_t + \omega^T g_1 + \frac{1}{2}\omega^T H_1 \omega$
and
$f_2 = \frac{1}{2} f_t + \omega^T g_2 + \frac{1}{2}\omega^T H_2 \omega$.
Note that $\Tilde{f} = f_1 + f_2$, since $g_t = g_1 + g_2$ and $H_t = H_1 + H_2$.
Finally, FOSI minimizes $f_1$ and $f_2$ independently, while using a scaled Newton's step to minimize $f_1$ and the base optimizer step to minimize $f_2$.

We observe that $f_1$ has similar slope and curvature as $\Tilde{f}$ in the subspace that is spanned by $\Vtop$ and zero slope and curvature in its orthogonal complement $\Vbottom$, while $f_2$ has similar slope and curvature as $\Tilde{f}$ in $\Vbottom$ and zero slope and curvature in $\Vtop$.
To minimize $f_1$, FOSI changes $\theta$ in the direction $d_1$ that is a linear combination of $\Vtop$'s columns, and to minimize $f_2$, it changes $\theta$ in the direction $d_2$ that is a linear combination of $\Vbottom$'s columns. 
Hence, we can look at each step of FOSI as two simultaneous and orthogonal optimization steps that do not affect the solution quality of each other, and their sum is a step in the minimization of $\Tilde{f}$.
Figure~\ref{fig:fosi_illustration_contour} illustrates this idea for the quadratic function $f(\theta) = 1.25\theta_1^2 + 1.25 \theta_2^2 + 1.5 \theta_1 \theta_2$.

\betterparagraph{Avoiding Matrix Inversion}
We stress that unlike most hybrid methods, FOSI does not require inverting $H_1$.
Rather, the inverse preconditioner is obtained directly and exactly using the output of ESE: $H_1^{-1} =  \Vtop \diag{\uvec} \Vtop^T$.
This helps avoid error amplification due to matrix inversion~\citep{li2017preconditioned}.

\begin{figure}
    \centering
    \large
    $f$ \;
    \raisebox{-0.5\height}{\includegraphics[width=0.15\linewidth]{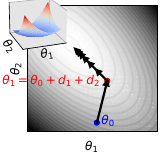}}
    $ \quad = \quad f_1 \; $
    \raisebox{-0.5\height}{\includegraphics[width=0.15\linewidth]{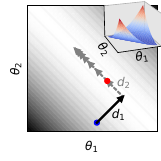}}
    $ \quad + \quad f_2 \; $
    \raisebox{-0.5\height}{\includegraphics[width=0.15\linewidth]{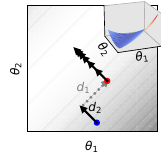}}
    \caption{
    FOSI's update steps (arrows) when minimizing a quadratic function $f(\theta)$. 
    FOSI implicitly separates the space
    into two orthogonal complement subspaces and then splits the original function $f$ into two functions $f_1$ and $f_2$ over these subspaces, such that $f = f_1 + f_2$.
     FOSI solves $\min f$ by simultaneously solving $\min f_1$ with Newton's method and $\min f_2$ with the base optimizer.
     The update step is the sum of $d_1$ and $d_2$, the updates to $f_1$ and $f_2$ respectively.
    }
    \label{fig:fosi_illustration_contour}
\end{figure}

\subsection{Preconditioner Analysis}
\label{sec:diagonal-prec}

We next analyze FOSI as a preconditioner.
For simplicity, the $t$ subscript is omitted from $g_t$ and $H_t$ when it is clear from the text that the reference is to a specific point in time.

For base optimizers that utilize a diagonal matrix as a preconditioner (e.g., Adam),
the result is an efficient computation, as $P^{-1} g$ is equivalent to element-wise multiplication of $P^{-1}$'s diagonal with $g$.
When using FOSI with such a base optimizer, the diagonal of the inverse preconditioner, denoted by $\qvec$, is calculated using $g_2$, instead of $g$.
Hence, $d_b = -\eta\diag{\qvec} g_2$, for some learning rate $\eta > 0$.

\begin{lemma} \label{lemma:diagonal_preconditioner}
Let $f(\theta)$ be a convex twice differential function and let BaseOpt be a first-order optimizer that utilizes a positive definite diagonal preconditioner.
Let $H$ be $f$'s Hessian at iteration $t$ of FOSI with BaseOpt, and let $V \diag{\lambdavec} V^T$ be an eigendecomposition of $H$ such that $V = [\Vtop, \Vbottom]$ and $\lambdavec = [\ltop, \lbottom]$.
Then:
\begin{enumerate}[noitemsep,nosep,leftmargin=*]
    \item FOSI's inverse preconditioner is
    $
    P^{-1} = 
        V 
        \left(
        \begin{array}{cc}
        \alpha \diag{\uvec} & \mathbf{0} \\ 
        \mathbf{0} & \eta M \\
      \end{array}\right)
        V^T
    $,
    where $M$ is the trailing $n - k - \ell$ principal submatrix (lower right corner submatrix) of $V^T \diag{\qvec} V$,
    and $\diag{\qvec}$ is the inverse preconditioner produced by BaseOpt from $g_2$.
    \item The preconditioner $P$ is symmetric and positive definite.
    \item $\alpha$ is an eigenvalue of the effective Hessian $P^{-1}H$, and $\Vtop$’s columns are in the eigenspace of $\alpha$.
\end{enumerate}
\end{lemma}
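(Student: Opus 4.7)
My plan is to compute $P^{-1}$ directly from the update rule and then read off each of the three claims from its spectral form in the basis $V$.

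First I would simplify the two descent directions. Using $\hat V^T \hat V = I_{k+\ell}$ we get $\hat V^T g_1 = \hat V^T\hat V\hat V^T g = \hat V^T g$, so line~8 reduces to $d_1 = -\alpha\, \hat V\,\operatorname{diag}(u)\,\hat V^T g$. For line~10, $d_2 = (I-\hat V\hat V^T) d_b = -\eta(I-\hat V\hat V^T)\operatorname{diag}(q)(I-\hat V\hat V^T)g$, where I used that $g_2 = (I-\hat V\hat V^T)g$. Because $V=[\hat V,\check V]$ is orthogonal, $I-\hat V\hat V^T = \check V\check V^T$. Adding $d_1+d_2 = -P^{-1}g$ yields
\begin{equation*}
P^{-1} = \alpha\,\hat V\operatorname{diag}(u)\hat V^T \;+\; \eta\,\check V\check V^T \operatorname{diag}(q)\,\check V\check V^T.
\end{equation*}

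For claim~1, I would then sandwich this expression between $V^T$ and $V$. Using $\hat V^T\check V=0$, $\check V^T\check V = I_{n-k-\ell}$ and $V^TV=I_n$, the cross blocks vanish and the matrix becomes block diagonal with upper-left block $\alpha\operatorname{diag}(u)$ and lower-right block $\eta\,\check V^T\operatorname{diag}(q)\check V$. This last block is exactly the trailing $(n-k-\ell)\times(n-k-\ell)$ principal submatrix of $V^T\operatorname{diag}(q)V$, which gives the stated $M$. Rewriting $P^{-1}=V\,\mathrm{blkdiag}(\alpha\operatorname{diag}(u),\eta M)V^T$ finishes part~1.

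For claim~2, symmetry of $P^{-1}$ (and hence of $P$) is immediate from this form. For positive definiteness: $\alpha>0$ and the entries of $u$ are $1/|\hat\lambda_i|>0$ (using the standard $\epsilon$-regularisation discussed in \S\ref{sec:fosi} if some $\hat\lambda_i=0$), so $\alpha\operatorname{diag}(u)\succ 0$. For $M$, $x^T M x = (\check V x)^T\operatorname{diag}(q)(\check V x)$; since $\check V$ has full column rank, $\check V x\neq 0$ whenever $x\neq 0$, and $\operatorname{diag}(q)\succ 0$ by hypothesis on BaseOpt, so $M\succ 0$. A congruence by the orthogonal $V$ preserves positive definiteness, so $P^{-1}\succ 0$, whence $P\succ 0$.

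For claim~3, I would multiply: writing $H=V\,\mathrm{blkdiag}(\operatorname{diag}(\ltop),\operatorname{diag}(\lbottom))V^T$,
\begin{equation*}
P^{-1}H = V\begin{pmatrix}\alpha\operatorname{diag}(u)\operatorname{diag}(\ltop) & \mathbf{0}\\ \mathbf{0} & \eta M\operatorname{diag}(\lbottom)\end{pmatrix}V^T.
\end{equation*}
Since $f$ is convex all eigenvalues of $H$ are nonnegative, so $|\ltop|=\ltop$ componentwise and $u\odot\ltop = \mathbf 1_{k+\ell}$. Hence the top-left block collapses to $\alpha I_{k+\ell}$. Multiplying $P^{-1}H$ by $\hat V$ and using $V^T\hat V=\left(\begin{smallmatrix}I\\ 0\end{smallmatrix}\right)$ gives $P^{-1}H\hat V = \alpha \hat V$, so each column of $\hat V$ lies in the $\alpha$-eigenspace of $P^{-1}H$, establishing claim~3.

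The only place where care is really needed is the simplification $u\odot\ltop=\mathbf 1$, which uses convexity (to remove the absolute values); everything else is bookkeeping with the orthogonal change of basis $V$.
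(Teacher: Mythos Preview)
Your proof is correct and follows essentially the same route as the paper's: compute $d_1,d_2$ explicitly, read off $P^{-1}$, conjugate by $V$ to obtain the block-diagonal form, and then check symmetry, positive definiteness, and the eigenstructure of $P^{-1}H$ block by block. The only cosmetic differences are that you use $I-\hat V\hat V^T=\check V\check V^T$ directly (the paper writes this as $V\operatorname{diag}([\mathbf 0_{k+\ell},\mathbf 1_{n-k-\ell}])V^T$), and you verify $M\succ 0$ via the quadratic form $x^TMx=(\check V x)^T\operatorname{diag}(q)(\check V x)$ rather than by citing that principal submatrices of PD matrices are PD; your explicit remark that convexity is what makes $u\odot\ltop=\mathbf 1_{k+\ell}$ is a point the paper leaves implicit.
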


The proof can be found in Appendix~\ref{appendix:proof_lemma_diagonal_preconditioner}.
It includes expressing $d_1$ and $d_2$ as a product of certain matrices with $g$, substituting these expressions into $\theta_t+d_1+d_2$, and using linear algebra properties to prove that the resulting preconditioner is a symmetric positive definite matrix.
Finally, we assign $P^{-1}$ expression to obtain $P^{-1} H$.
Note that symmetric positive definite preconditioner is necessary for ensuring that the search direction always points towards a descent direction~\citep{li2017preconditioned}.

As expected, we obtained a separation of the space into two subspaces.
For the subspace that is spanned by $\Vtop$, for which FOSI uses scaled Newton's method, the condition number is 1.
For the complementary subspace, the condition number is determined by BaseOpt's preconditioner.
In the general case, it is hard to determine the impact of a diagonal preconditioner on the condition number of the problem, although it is known to be effective for diagonally dominant Hessian~\citep{qu2020diagonal,levy2019necessary}.
Appendix~\ref{appendix:proof_lemma_diagonal_preconditioner} includes an analysis of the special case in which $H$ is diagonal.
We show that even in this case, which is ideal for a diagonal preconditioner, FOSI provides benefit, since it solves $f_1$ with Newton's method and provides the base optimizer with $f_2$, which is defined on a smaller subspace, hence can be viewed as of smaller dimensionality than $f$.

\betterparagraph{Identity Preconditioner} \label{sec:identity_preconditioner}
For base optimizers with identity preconditioner such as GD, we can perform a complete spectral analysis of FOSI's preconditioner and effective Hessian, even for non-diagonal $H$.
This allows us to obtain the effective condition number and the conditions in which it is smaller than the original one.
However, since FOSI uses two optimizers on orthogonal subspaces, a more relevant measure for improvement is whether the effective condition number of each subspace is smaller than the original one.
We show that the condition number of the subspace that is spanned by $\Vtop$ is 1 and the condition number of $\Vbottom$ is $\lambda_{k+1} / \lambda_{n - \ell}$.
Both condition numbers, of $f_1$ and $f_2$, are smaller than the condition number of the original $H$.
See Appendices~\ref{appendix:proof_diagonal_preconditioner} for proofs and analysis.

\subsection{Momentum} \label{sec:momentum}

Momentum accelerates convergence of first-order optimizers and is adapted by many popular optimizers~\citep{qian1999momentum,adam_2014}.
When using momentum, the descent direction is computed on $\bar{g}$, instead of $g$, where $\bar{g}$ is a linear combination of the current and past gradients.
Momentum could also be used by FOSI;
however, FOSI and the base optimizer must apply the same linear combination on $g_1$ and $g_2$, which entails $\bar{g} = \bar{g_1} + \bar{g_2}$, to maintain the orthogonality of $f_1$ and $f_2$.
We can use $\bar{g}, \bar{g_1}, \bar{g_2}$ in the proof of Lemma~\ref{lemma:diagonal_preconditioner}, instead of $g, g_1, g_2$ and obtain similar results.

\subsection{Convergence in the Stochastic Setting} \label{sec:stochastic}
We adopt the stochastic setting proposed by \citet{doi:10.1137/15M1053141}. Consider the stochastic optimization problem $\min_{\theta} f(\theta)$ for $f(\theta) = \mathbb{E}_x [ F(\theta, x) ]$, where $F: \mathbb{R}^n \times \mathbb{R}^d \rightarrow \mathbb{R}$ is twice differentiable w.r.t $\theta$ and $x \in \mathbb{R}^d$ denotes a random variable with distribution $P$.
When stochastic optimization is used for DNN training, $f$ is usually approximated by a series of of functions:
at each iteration $t$, a batch $b_t$ containing $m_t$ data samples $\{ x_1, x_2, \myDots, x_{m_t} \}$ is sampled and the function $f^t$ is set as $f^t(\theta) = \frac{1}{m_t} \sum_{i=1}^{m_t} F(\theta, x_i)$.
Note that labels, if any, can be added to the data vectors to conform to this model.
Adapting FOSI to stochastic DNN training requires a small change to FOSI's algorithm.
at the beginning of each iteration, the first action would be to sample a batch $b_t$ and set $f^t$.
We can call the ESE procedure with the current $f^t$, or with some predefined $f^i, i \le t$, as discussed in \S\ref{sec:implementation}.

We now show convergence of FOSI in the common stochastic setting under common Lipschitz smoothness assumptions on $f$ and $F$, and assuming bounded noise level of the stochastic gradient.

\begin{lemma} \label{lemma:convergence_stochastic}
    Let BaseOpt be a first-order optimizer that utilizes a positive definite diagonal preconditioner and denote by $\nabla^2 F (\theta, x) = \frac{\partial^2 F}{\partial \theta^2}$ the Hessian of $F$ w.r.t $\theta$.
    Assuming:
    \begin{enumerate}[noitemsep,nosep,leftmargin=*]
        \item $f(\theta)$ is $L$-smooth and lower bounded by a real number.
        \item \label{step:assumption-bounded-variance}
        For every iteration $t$, $\mathbb{E}_{x_t} [\nabla_{\theta} F(\theta_t, x_t)] = \nabla f(\theta_t)$ and $\mathbb{E}_{x_t} [\norm*{\nabla_{\theta} F(\theta_t, x_t) - \nabla f(\theta_t)}^2] \le \sigma^2$, where $\sigma > 0$, $x_t$ for $t=1,2,\myDots$ are independent samples, and for a given $t$ the random variable $x_t$ is independent of $\{ \theta_i \}_{i=1}^{t}$.
        \item \label{step:assumption-bounded-hessian}
        There exist a positive constant $z$ s.t. for every $\theta$ and $x$, $ \norm*{\nabla^2 F (\theta, x)} \le z$ and the diagonal entries of BaseOpt's preconditioner are upper bounded by $z$.
    \end{enumerate}
    Then, for a given $\epsilon \in (0, 1)$, the number of iterations $N$ needed to obtain $\frac{1}{N} \sum_{t=1}^{N} \mathbb{E} [ \norm*{ \nabla f (\theta_t) }^2 ] \le \epsilon$ when applying FOSI with BaseOpt is $N = O ( \epsilon^{-1/(1-\beta)} )$, for step size $\eta$ chosen proportional to $t^{-\beta}$, where $\beta \in (0.5, 1)$ is a constant.
\end{lemma}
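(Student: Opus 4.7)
My plan is to run a standard nonconvex stochastic gradient analysis (in the style of Ghadimi--Lan and Wang et al.\ 2014 for preconditioned SGD), where the only FOSI-specific step is importing the spectral structure of the preconditioner from Lemma~\ref{lemma:diagonal_preconditioner}. I would first rewrite one step of FOSI on a stochastic batch as $\theta_{t+1} - \theta_t = -P_t^{-1}\tilde{g}_t$, where $\tilde{g}_t = \nabla_{\theta} F(\theta_t,x_t)$ and $P_t$ is the symmetric PD preconditioner produced by Lemma~\ref{lemma:diagonal_preconditioner} at iteration $t$. Note $P_t$ is measurable with respect to the history $\mathcal{F}_t = \sigma(x_1,\myDots,x_{t-1})$: $\Vtop$ is updated only at past ESE calls (by Assumption~\ref{step:assumption-bounded-hessian} applied to $f^i$ for $i\le t$), and the diagonal factor $\diag{q}$ of a standard base optimizer (e.g.\ Adam) depends only on past gradients.

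Applying the $L$-smooth descent inequality and taking conditional expectation given $\mathcal{F}_t$, unbiasedness (Assumption~\ref{step:assumption-bounded-variance}) and the variance decomposition $\mathbb{E}[\|\tilde{g}_t\|^2\mid\mathcal{F}_t] \le \|\nabla f(\theta_t)\|^2 + \sigma^2$ give
\begin{equation*}
\mathbb{E}[f(\theta_{t+1})\mid\mathcal{F}_t] \le f(\theta_t) - \nabla f(\theta_t)^\top P_t^{-1}\nabla f(\theta_t) + \tfrac{L}{2}\mathbb{E}[\tilde{g}_t^\top P_t^{-2}\tilde{g}_t\mid\mathcal{F}_t].
\end{equation*}
The core FOSI-specific step is then to bound the spectrum of $P_t^{-1}$ uniformly using the block form from Lemma~\ref{lemma:diagonal_preconditioner}. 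The top block contributes eigenvalues $\alpha/|\lambda_i|$ for $\lambda_i\in\ltop$; Assumption~\ref{step:assumption-bounded-hessian} gives $|\lambda_i|\le z$, hence a lower bound $\alpha/z$, while the division-by-zero safeguard ($u$ is formed with a small epsilon as noted in \S\ref{sec:fosi}) yields a matching upper bound. The bottom block $\eta M$ is a principal submatrix of $\eta V^\top\diag{q}V$; since the base preconditioner diagonal entries are bounded above by $z$ (Assumption~\ref{step:assumption-bounded-hessian}), Cauchy interlacing gives $\eta M \preceq \eta z\,I$, and the PD assumption on BaseOpt's preconditioner gives a positive lower bound $c_0\eta$. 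Combining the two blocks yields constants $c_1,c_2 > 0$ (independent of $t$ up to the $\eta_t$ factor) with $c_1\eta_t I \preceq P_t^{-1} \preceq c_2 I$, which is what the analysis needs.

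Substituting these bounds produces the SGD-type recurrence
\begin{equation*}
\mathbb{E}[f(\theta_{t+1})] \le \mathbb{E}[f(\theta_t)] - c_1\eta_t\,\mathbb{E}[\|\nabla f(\theta_t)\|^2] + \tfrac{L c_2^2}{2}(\mathbb{E}[\|\nabla f(\theta_t)\|^2]+\sigma^2).
\end{equation*}
For $\eta_t \propto t^{-\beta}$ with $\beta\in(0.5,1)$, the quadratic noise term is eventually dominated by the linear progress term; rearranging, telescoping from $t=1$ to $N$, and using the lower bound on $f$ (Assumption~1) gives
\begin{equation*}
\sum_{t=1}^{N} \eta_t\,\mathbb{E}[\|\nabla f(\theta_t)\|^2] \;=\; O(1) + O\!\Big(\sum_{t=1}^{N}\eta_t^2\Big).
\end{equation*}
Using $\sum_{t\le N}\eta_t = \Theta(N^{1-\beta})$ and $\sum_{t\le N}\eta_t^2 = \Theta(N^{1-2\beta})$ (finite since $\beta>1/2$), dividing by $\sum_t \eta_t$ and lower-bounding $\eta_t\ge \eta_N$ in the left-hand average yields $\tfrac{1}{N}\sum_{t=1}^{N}\mathbb{E}[\|\nabla f(\theta_t)\|^2] = O(N^{-(1-\beta)})$, so $N=O(\epsilon^{-1/(1-\beta)})$ suffices.

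The main obstacle I anticipate is the spectral bound on $P_t^{-1}$: the top block eigenvalues $\alpha/|\lambda_i|$ are well behaved only if extreme Hessian eigenvalues are bounded away from zero, which is why the epsilon-regularization in line~17 of Algorithm~\ref{algo:fosi} matters, and the lower bound on the bottom block requires that BaseOpt's diagonal preconditioner entries remain bounded below by a positive constant — a standard but non-trivial requirement that needs to be folded into (or deduced from) Assumption~\ref{step:assumption-bounded-hessian}. Everything else is the standard Ghadimi--Lan diminishing step-size argument applied to the preconditioned stochastic gradient.
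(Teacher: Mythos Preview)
Your overall strategy --- cast FOSI's step as $-P_t^{-1}\tilde g_t$, bound the spectrum of $P_t^{-1}$ via the block form of Lemma~\ref{lemma:diagonal_preconditioner} and Cauchy interlacing, then run a Ghadimi--Lan telescoping --- matches the paper's (which phrases it as verifying the hypotheses of Theorem~2.8 in \cite{doi:10.1137/15M1053141}). But there is a real gap in the step-size scaling. With $\alpha$ held constant, the top block $\alpha\,\diag{u}$ contributes eigenvalues of order $\Theta(1)$ to $P_t^{-1}$, so the two-sided bound you obtain, $c_1\eta_t I \preceq P_t^{-1}\preceq c_2 I$ with $c_2$ independent of $\eta_t$, yields a noise term $\tfrac{L}{2}\,\mathbb{E}[\tilde g_t^\top P_t^{-2}\tilde g_t]\le \tfrac{Lc_2^2}{2}\bigl(\|\nabla f(\theta_t)\|^2+\sigma^2\bigr)$ that is $\Theta(1)$, not $O(\eta_t^2)$. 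Your displayed recurrence is then correct as written but does \emph{not} telescope to the bound you claim: summing over $t$ produces $\tfrac{Lc_2^2}{2}\sigma^2\,N$ on the right, and for large $t$ the coefficient $c_1\eta_t-\tfrac{Lc_2^2}{2}$ in front of $\mathbb{E}\|\nabla f(\theta_t)\|^2$ is negative, so no $O(1)+O\!\left(\sum_t\eta_t^2\right)$ estimate follows.

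The paper closes this gap by taking $\alpha=\eta=\eta_t$, so that the entire inverse preconditioner factors as $P_t^{-1}=\eta_t\,\tilde P_t^{-1}$ with $\tilde P_t^{-1}$ having spectrum uniformly in $[1/z,\,1/\epsilon]$ (via exactly the Hessian bound, the epsilon-safeguard on $u$, and the interlacing argument for $M$ that you sketched). With that factorization the noise term carries $\eta_t^2$ and the standard diminishing-step argument goes through. Your final paragraph correctly flags the need for two-sided spectral bounds; what is missing is that \emph{both} sides must scale with $\eta_t$, which forces the Newton-block learning rate to decay as $\alpha\propto t^{-\beta}$ as well.
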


The proof (in Appendix~\ref{appendix:proof_convergence_stochastic}) works by expressing FOSI in the stochastic quasi-Newton method form used in Theorem~2.8 of~\citet{doi:10.1137/15M1053141}, and proving the Theorem's conditions are satisfied.

In the convex, non-stochastic scenario, the convergence rate of the base optimizer becomes the limiting factor, as Newton's method demonstrates a quadratic convergence rate on $f_1$.
Therefore, FOSI's convergence rate mirrors that of the base optimizer for $f$, but with improved constants due to the smaller condition number of $f_2$.
For instance, the convergence analysis of GD yields $f(\theta_t) - f(\theta^*) \le \|\theta_0 - \theta^* \|^2 / (2\alpha t)$ for $\alpha \le 1/L$.
In the convex case, $L=\lambda_1$ (the maximal eigenvalue of the Hessian).
Since FOSI-GD reduces the maximal eigenvalue to $\lambda_{k+1}$, its bound is tighter.

\subsection{Automatic Learning Rate Scaling} \label{sec:automatic_lr_scaling}

When the base optimizer has a closed-form expression of its optimal learning rate in the quadratic setting that is only dependant on the extreme eigenvalues, FOSI can adjust a tuned learning rate $\eta$ to better suit the condition number of $f_2$.
Fortunately, in most cases of optimizers that utilize a diagonal preconditioner, such as GD, Heavy-Ball, and Nesterov, there are such closed-forms~\citep{doi:10.1137/15M1009597}.
Specifically, when applying FOSI with such a base optimizer and given the relevant closed-form expression for the optimal learning rate, the adjusted learning rate at iteration $t$ would be $\eta_2 = \eta (\eta_2^* / \eta^*)$, where $\eta^*$ is the optimal learning rate for the quadratic approximation $\tilde{f}$ and $\eta_2^*$ is the optimal one for $f_2$.
FOSI is able to compute this scaling, using the ESE outputs.

The intuition behind this scaling is that the ratio between the optimal learning rates is proportional to the ratio between the condition number of $\tilde{f}$ and that of $f_2$.
The full details regarding this scaling technique are in Appendix~\ref{appendix:automatic_lr_scaling}.
Note that $\eta_2^* / \eta^* \ge 1$.
In practice, we suggest a more conservative scaling that involves clipping over this scaling factor as follows: $\eta_2 = \eta \min \{ \eta^*_2 / \eta^*, c \}$ for $c \ge 1$.
For $c=1$, $\eta_2 = \eta$, and for extremely large $c$ ($\infty$), the scaling factor is not clipped.

\subsection{Error and Overhead}
\label{sec:implementation}

\betterparagraph{ESE Approximation Error}
Using Newton's method in non-quadratic settings in conjunction with inexact approximation of Hessian eigenvalues through the ESE procedure increases the risk of divergence.
FOSI uses several techniques to address this: scaled Newton's method, extra numerical accuracy inside the ESE procedure, full orthogonalization, and warmup. The details are available in Appendix~\ref{appendix:ese_approx_error}.
In practice, our experiments on a variety of DNNs in \S\ref{sec:evaluation} demonstrate that FOSI is robust and substantially improves convergence.

\betterparagraph{Runtime}
FOSI’s runtime differs from that of the base optimizer due to additional computations in each update step and calls to the ESE procedure.
For large and complex functions, the latency of the update step of both optimizers, the base optimizer and FOSI, is negligible when compared to the computation of the gradient in each iteration.
Furthermore, since each Lanczos iteration is dominated by the Hessian-vector product operation which takes approximately two gradient evaluations, the latency of the ESE procedure can be approximated by $2 m \tau$, where $\tau$ is gradient computation latency and $m$ the number of Lanczos iterations (see \S\ref{sec:lanczos}).
The ESE procedure is called every $T$ iterations, and the parameter $T$ should be set such that FOSI's runtime is at most $\rho$ times the base optimizer runtime, for a user-defined overhead $\rho > 1$.
Thus, given the above approximations and assumptions, we can achieve overhead $\rho$ by setting:\quad
$T = 2m/(\rho -1)$.
This heuristic helps avoid the need to tune $T$, though FOSI can of course use any $T>0$.
See Appendix~\ref{appendix:runtime} for additional details as well as a more accurate expression for $T$ for functions where additional computations are not negligible in comparison to gradient computations.

\betterparagraph{Memory}
FOSI stores $k+\ell$ eigenvectors of size $O(n)$, and temporarily uses $O(mn)$ memory when performing the ESE procedure.
In comparison, other second order methods such as K-FAC~\citep{10.5555/3045118.3045374} and Shampoo~\citep{pmlr-v80-gupta18a} incur $O(\sum_{i \in L} d_i^2 + p_i^2)$ memory overhead, where $d_i$ is the input dimension of layer $i$, $p_i$ the output dimension, and $L$ the total number of layers.

\section{Evaluation} 
\label{sec:evaluation}

We first evaluate FOSI's performance on benchmarks tasks including real-world DNNs with standard datasets, for both first- and second-order methods.
We then validate our theoretical results by evaluating FOSI on a positive definite (PD) quadratic function with different base optimizers;
we explore the effect of the dimension $n$, the eigenspectrum, the learning rate, the base optimizer, and the clipping parameter $c$ on FOSI's performance. 
We implemented FOSI in Python using the JAX framework~\citep{jax2018github} 0.3.25.
For experiments, we use an NVIDIA A40 GPU.

\subsection{Deep Neural Networks} \label{experiment:dnn}

We evaluated FOSI on five DNNs of various sizes using standard datasets, first focusing first-order methods in common use.
We execute FOSI with $k=10$ and $\ell=0$, since small eigenvalues are usually negative.
We set $\alpha=0.01$, $c=3$, and $W$ such that warmup is one epoch.
$T$ is determined using the heuristic suggested in \S~\ref{sec:implementation} aiming at 10\% overhead ($\rho=1.1$), resulting in $T=800$ for all experiments.
The base optimizers compared to FOSI are Heavy-Ball (HB) and Adam;
we omit GD (SGD) as it performed worse than HB in most cases.
We use the standard learning rate for Adam ($0.001$), and the best learning rate for HB out of ${0.1, 0.01, 0.001}$, with default momentum parameters $\beta_1=0.9, \beta_2=0.999$ for Adam and $\beta=0.9$ for HB.
The five evaluated tasks are:
\begin{enumerate}[nosep, leftmargin=*]

    \item
    \textbf{Audio Classification (AC): }
    Training MobileNetV1 (approximately 4 million parameters) on the AudioSet dataset~\citep{audio_set}.
    The dataset contains about 20,000 audio files, each 10 seconds long, with 527 classes and multiple labels per file.
    We converted the audio files into 1-second mel-spectrograms and used them as input images for the DNN.
    The multi-hot label vector of a segment is the same as the original audio file's label.
    
    \item
    \textbf{Language Model (LM): }
    Training an RNN-based character-level language model with over 1 million parameters~\citep{haiku2020github} on the Tiny Shakespeare dataset~\citep{tiny_shakespeare}.
    For LM training batches are randomly sampled; there is no defined epoch and we use $W=T$.
    
    \item
    \textbf{Autoencoder (AE): }
    Training an autoencoder model with roughly 0.5 million parameters on the CIFAR-10 dataset.
    Implementation is based on~\citet{lippe2022uvadlc} with latent dimension size 128.
    We observed that the HB optimizer in this case is sensitive to the learning rate and diverges easily.
    Therefore we run FOSI with $c=1$ (prevents learning rate scaling) and $W=T$, which enables extra warmup iterations (number of iteration per epoch is 175).
    
    \item
    \textbf{Transfer Learning (TL): }
    Transfer learning from ImageNet to CIFAR-10.
    We start with a pre-trained ResNet-18 on ImageNet2012 and replace the last two layers with a fully-connected layer followed by a Softmax layer.
    We train the added fully-connected layer (5130 params), while the other layers are frozen (11 million parameters).
    
    \item
    \textbf{Logistic Regression (LR): }
    Training a multi-class logistic regression model to predict the 10 classes of the MNIST dataset.
    The model is a neural network with one fully connected layer of 784 input size followed by a Softmax layer of 10 outputs, containing 7850 parameters.
    The input data is the flattened MNIST images.
    Since logistic regression is a convex function, the model is also convex w.r.t.~the parameters of the network.

\end{enumerate}

\begin{figure}
\begin{minipage}{.55\textwidth}

\begin{table}[H]
\centering
\caption{Wall time in seconds to reach target validation accuracy (AC, TL, LR) or loss (LM, AE).
The target (in parentheses) is the best one reached by the base optimizer.
No single base optimizer is best for all tasks.
}
\label{table:time_to_accuracy}
\begin{tabular}{@{}c @{~~~} r @{~~~} r@{~~}r r @{~~~} r@{~~}r@{}}
\toprule
    Task & \multicolumn{1}{c}{HB} & \multicolumn{2}{c}{FOSI-HB} & \multicolumn{1}{c}{Adam} & \multicolumn{2}{c}{FOSI-Adam}   \\
    \midrule
    \arrayrulecolor{lightgray}
    
    AC & 3822 & \textbf{1850} & (40.4\%) & 5042 & \textbf{3911} & (28.9\%) \\
    \cmidrule(r){1-7}
    LM & 269 & \textbf{207} & (1.71) & 270 & \textbf{219} & (1.76) \\
    \cmidrule(r){1-7}
    AE & 354 & \textbf{267} & (52.46) & 375 & \textbf{313} & (51.26) \\
    \cmidrule(r){1-7}
    TL & 93 & \textbf{53} & (79.1\%) & 68 & \textbf{33} & (79.0\%) \\
    \cmidrule(r){1-7}
    LR & 16 & \textbf{8} & (92.8\%) & \textbf{12} & 18 & (92.8\%) \\

\arrayrulecolor{black}
\bottomrule
\end{tabular}
\end{table}

\end{minipage}
\hfill
\begin{minipage}{.42\textwidth}
    \centering
    \vspace{20pt} 

    \includegraphics[width=1.0\linewidth] {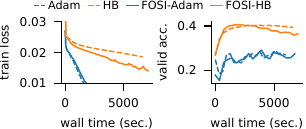}
    \captionof{figure}{
    Training AC (MobileNetV1 on AudioSet data).
    FOSI converges faster than HB and similar to Adam across wall time (left).
    However, Adam overfits and generalizes poorly as indicated by its low validation accuracy (right).
    }
    \label{fig:audioset_fosi_vs_base}

\end{minipage}
\end{figure}

Table~\ref{table:time_to_accuracy} summarize the experimental results, showing the wall time when reaching a target validation accuracy (for AC, TL, LR tasks) or target validation loss (for LM, AE tasks).
The target metric (in parentheses) is the best one reached by the base optimizer.
FOSI consistently reaches the target metric faster than the base optimizer (though Adam is faster than FOSI-Adam on LR, FOSI-HB is faster than both).
The improvement in FOSI-HB is more significant than in FOSI-Adam, due to FOSI-HB's ability to adapt the learning rate according to the improved condition number of the effective Hessian.

Figure~\ref{fig:audioset_fosi_vs_base} shows the optimizers' learning curves for the AC task.
The training loss curves suggests that FOSI significantly improves the performance of HB, but does not help Adam.
However, while Adam's training loss reaches zero quickly, it suffers from substantial overfitting and generalizes poorly, as indicated by its accuracy.
This supports the idea that there is no single best optimizer for all problems~\citep{zhou2020towards}. 
FOSI aims to improve the best optimizer for each specific task.
In this case, HB is preferred over Adam as it generalizes much better, and FOSI improves over HB.
It is important to note that when the base optimizer overfits, FOSI's acceleration of convergence also leads to an earlier overfitting point.
This can be observed in the validation accuracy curve: HB begins to overfit near epoch 70 while FOSI begins to overfit at roughly epoch 35.
Overall, throughout our experiments, we have not observed FOSI to overfit more than the base optimizer; FOSI always reaches the same or superior validation accuracy as the base optimizer.

\betterparagraph{Summary} 
FOSI improves convergence of the base optimizer, and is the fastest optimizer for all five tasks.
On average, FOSI achieves the same loss as its base optimizer in 78\% of the time on the training set and the same accuracy/loss in 75\% of the time on the validation set.
Thus while the average FOSI iteration takes longer than the base optimizer's, FOSI requires fewer iterations resulting in overall faster convergence.
Additional results can be found in Appendix~\ref{appendix:additional_evaluation_results}.

\subsection{Comparison to Second-Order Methods}

We compare FOSI to two representative second-order techniques, K-FAC~\citep{10.5555/3045118.3045374} and L-BFGS~\citep{liu1989limited};
these use a block-diagonal approximation of the Hessian as a preconditioner, subsequently perform its inversion.
We repeat the five DNN training experiments and compare the results of both algorithms to FOSI-HB. 
We use the KFAC-JAX~\citep{kfac-jax2022github} implementation for K-FAC and the JAXOpt library~\citep{jaxopt_implicit_diff} for L-BFGS.

We used grid search to tune K-FAC's learning rate and momentum, and included K-FAC's \emph{adaptive} as one of the options. 
We utilized adaptive damping and maintained the default and more precise $T_3$ (interval between two computations of the approximate Fisher inverse matrix) value of 5 after testing larger $T_3$ values and observing no variation in runtime.
For tuning L-BFGS hyperparameters, we used line-search for the learning rate, and performed a search for the optimal $L$ (history size) for each task, starting from $L=10$ (similar to $k$ we used for FOSI) and up to $L=100$.
The hyperparameters were selected based on the lowest validation loss obtained for each experiment.

Overall, we observed that both K-FAC and L-BFGS algorithms have slower runtimes and poorer performance compared to FOSI.
They occasionally diverge, can overfit, and rarely achieve the same level of validation accuracy as FOSI.
See Appendix~\ref{appendix:additional_evaluation_results} for figures and additional results.

\subsection{Quadratic Functions} \label{sec:experiment_quadratic_function}

To evaluate FOSI's optimization performance across range of parameters, we use controlled experiments on PD quadratic functions of the form $f_H(\theta) = 0.5 \theta^T H \theta$.
We use GD, HB, and Adam to minimize $f_H$, as well as FOSI with these base optimizers.
We use the default momentum parameters $\beta_1=0.9, \beta_2=0.999$ for Adam and $\beta=0.9$ for HB.
The learning rate $\eta$ for Adam was set to $0.05$ after tuning, for GD to the optimal value $2/(\lambda_1 + \lambda_n)$, and for HB we used $2 / (\sqrt{\lambda_1} + \sqrt{\lambda_n})^2$ which is half of the optimal value (due to using constant $\beta$ rather than optimal).
FOSI runs with $k=10$, $\ell=0$, $\alpha=1$, and $c=\infty$ (no clipping on the scaling of the GD and HB learning rates, see \S\ref{sec:automatic_lr_scaling}).

\betterparagraph{Dimensionality and Eigenspectrum}
To study the effect of dimensionality and eigenspectrum on FOSI, 
we created five $f_H$ functions for each $n \in \{100, 1500\}$ by varying $\lambda_1$ of the Hessian $H$ with $\lambda_1 \in \{5, 10, 20, 50, 200\}$. 
The other eigenvalues were set to $\lambda_i = 1.5^{-(i-2)}$ and the eigenvectors were extracted from a symmetric matrix whose entries were randomly sampled from $U(0,1)$.

\begin{figure}
\centering
{\includegraphics[width=0.9\columnwidth]{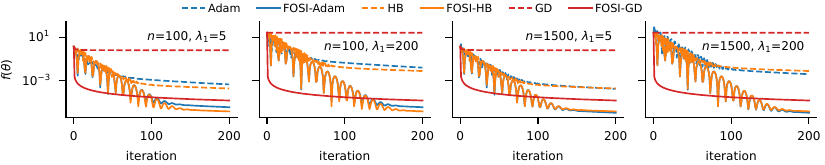}}
\caption{
    Learning curves for minimizing PD quadratic functions $f_H(\theta) = 0.5 \theta^T H \theta$ with varying $n$ and $\lambda_1$ values.
    FOSI converges more than two orders of magnitude faster than its counterparts.
}
\label{fig:quadratic_random_orthogonal_basis_4_funcs}
\end{figure}

Figure~\ref{fig:quadratic_random_orthogonal_basis_4_funcs} shows learning curves of the optimizers on functions with $\lambda_1=5$ and $\lambda_1=200$.
Similar results were obtained for other functions.
FOSI converges at least two orders of magnitude faster than its counterparts.
In this case, dimensionality has little impact on the performance of different optimizers.
For a specific $n$ value, increasing $\lambda_1$ causes the base optimizers to converge to less optimal solutions, but has little impact on FOSI. 
This is expected for GD and HB, whose learning rate is limited by the inverse of the largest eigenvalue, hence, larger $\lambda_1$ implies slower convergence. 
FOSI reduces the largest eigenvalue, allowing for larger learning rate that is identical for all functions.
Interestingly, this is observed for Adam as well.

\betterparagraph{Ill-conditioning and diagonally dominance} \label{experiment:quadratic_diagonal_dominanc}
We explore the effect of both the condition number and the diagonally dominance of the function's Hessian on the different optimizers.
We use a set of quadratic functions with different condition number and different rotation w.r.t. the coordinate system, which impacts the dominance of the Hessian's diagonal.
While all optimizers are negatively affected by large condition number, only Adam is affected by the rotation.
FOSI improves over the base optimizer in all cases.
The full details and analysis of the results are in Appendix~\ref{appendix:quadratic_jax_kappa_zeta_learning_curves}

\betterparagraph{Learning rate and momentum} \label{experiment:quadratic_learning_rate}
We explored the effect of various learning rates and momentum parameters on the optimizers. We find that FOSI improves over Adam, HB, and GD for all learning rates and momentum (for HB and Adam).
The full details of this experiment are in Appendix~\ref{appendix:learning_rate}.
\section{Related Work} \label{sec:related_work}


Partially second-order optimizers are 
a group of optimization methods that incorporate some aspects of second-order information in their optimization process.
%
Optimizers that use a diagonal preconditioner \citep{yao2021adahessian, jahani0SRMT22, henriques2019small, liu2023sophia}, and in fact approximate the Hessian diagonal, suffer when the assumption for diagonally dominance Hessian does not hold (see \S~\ref{sec:experiment_quadratic_function}).
L-BFGS~\citep{liu1989limited}, which uses low-rank approximation of the Hessian, is sensitive to the rank parameter and an incorrect selection can lead to slow convergence or divergence.
Additionally, it requires line search in each iteration, slowing down the optimization process further.
Recent approaches, such as K-FAC~\citep{10.5555/3045118.3045374}, Shampoo~\citep{pmlr-v80-gupta18a}, K-BFGS~\citep{10.5555/3495724.3495925}, LocoProp~\citep{amid2022locoprop}, Eva~\citep{zhang23eva}, and~\cite{9916144} exploit the structure of the network to approximate a block diagonal preconditioner matrix, as an alternative to full second-order methods.
However, these techniques approximate the preconditioner directly instead of approximating its inverse, potentially resulting in higher approximation errors and noise sensitivity~\citep{li2017preconditioned}.
They also exhibit comparable limitations to those of diagonal preconditioners due to neglecting Hessian elements outside the diagonal blocks, such as inter-layer parameter correlations or rotated problems (\S\ref{sec:experiment_quadratic_function}).
In contrast, by splitting the problem into two subspaces FOSI obtains a full low-rank representation of the Hessian for the first subspace $\Vtop$, which captures both the rotation and curvature of the sub-problem $f_1$. This contributes to accuracy and stability of the optimization, particularly as it is based on extreme eigenvalues and vectors that can be approximated more accurately.
%
%

Other optimization approaches for stochastic settings involve the use of sub-sampling of $f^i$ functions and constructing an approximation of the Hessian based on the gradients of these functions at each iteration~\citep{10.1007/s10107-018-1346-5,10.5555/3157382.3157434}.
However, these methods are limited to functions with only a few thousand parameters.
%
%
Hessian-free optimization methods~\citep{martens2010deep, martens2011learning, frantar2021m} rely on conjugate gradient to incorporate second order information, which while more efficient than Lanczos in terms of memory, it still often requires many steps to converge and is more sensitive to noise.
Finally, while these works propose a single improved optimizer, FOSI is a meta-optimizer.

\section{Discussion and Future Work} \label{sec:discussion}
FOSI is a hybrid meta-optimizer that combines a first-order base optimizer with Newton's method to improve the optimization process without additional tuning. 
Evaluation on real and synthetic tasks demonstrates FOSI improves the wall time to convergence when compared to the base optimizer.
Future research will focus on methods for automatic tuning of different parameters of FOSI, such as dynamically adjusting parameters $k$ and $\ell$ according to their impact on the effective condition number.
We also plan to investigate the effect of stale spectrum estimation, which could allow running the ESE procedure on the CPU in parallel to the training process on the GPU.

\subsubsection*{Acknowledgments}
The authors thank the anonymous reviewers for their valuable feedback.
The research leading to these results was supported by the Israel Science Foundation (grant No.191/18), the Technion Hiroshi Fujiwara Cyber Security Research Center, the Israel National Cyber Directorate, and the HPI-Technion Research School.

\bibliography{main}
\bibliographystyle{iclr2024_conference}

\clearpage

\begin{center}        
    \LARGE\scshape Appendix
\end{center}

\appendix

\section{First and Second-Order Integration}

\subsection{The ESE Algorithm} \label{appendix:the_ese_algorithm}

This section describes the ESE algorithm for obtaining the $k$ largest and $\ell$ smallest eigenvalues, as well as their corresponding eigenvectors, of the Hessian $H_t$.
The full details of the algorithm are in \S~\ref{sec:lanczos}.

ESE first sets the number of Lanczos iterations, $m$, defines the $hvp_t$ operator, and then calls the Lanczos algorithm.
After running Lanczos for $m$ iterations, its output is a matrix $U \in \mathbb{R}^{n \times m}$ with orthonormal columns and a tridiagonal real symmetric matrix $T \in \mathbb{R}^{m \times m}$

To extract the approximate eigenvalues and eigenvectors of $A$, let $Q \Lambda Q^T$ be the eigendecomposition of $T$, s.t.\ $\Lambda$ is a diagonal matrix whose diagonal is the eigenvalues of $T$ sorted from largest to smallest and $Q$'s columns are their corresponding eigenvectors.
The approximate $k$ largest and $\ell$ smallest eigenvalues of $A$ are the first $k$ and last $\ell$ elements of $\Lambda$'s diagonal,
and their approximate corresponding eigenvectors are the first $k$ and last $\ell$ columns of the matrix product $U Q$.

Algorithm~\ref{algo:lanczos} details the ESE procedure.

\begin{algorithm}[ht]
\caption{Extreme Spectrum Estimation.}
\label{algo:lanczos}
\begin{algorithmic}
    \item[{\bfseries procedure}] \textsc{ESE}($f$, $\theta_t$, $k$, $\ell$)
    \State $n \gets$ length of $\theta_t$
    \State $m \gets \max \{ 4 (k + \ell), 2 \ln n \}$
    \State $hvp_t \gets $ generate $hvp$ operator from $f$ and $\theta_t$.
    \State $U, T \gets $ Lanczos($m, hvp_t$)
    \State $Q, \Lambda \gets $ eigendecomposition($T$)
    \State $\ltop \gets $ first $k$ and last $\ell$ entries of $\Lambda$'s diagonal
    \State $\Vtop \gets$ first $k$ and last $\ell$ columns of $U Q$
    \State \textbf{return} $\ltop, \Vtop$
\end{algorithmic}
\end{algorithm}

\subsection{The FOSI Algorithm} \label{appendix:fosi_algorithm}

Algorithm~\ref{algo:fosi} provides the pseudocode for FOSI.
The details of the algorithm are in \S~\ref{sec:fosi}.

\begin{algorithm}[ht]
\caption{FOSI Optimizer.}
\label{algo:fosi}
\begin{algorithmic}[1]
    \item[{\bfseries initialization:}] 
    \State BaseOptStep: given gradient, return descent direction.
    \State $T$: number of iterations between two ESE runs.
    \State $W$: number of warmup iterations before calling ESE.
    \State $k, \ell$: parameters for ESE, $1 \le k+\ell \ll n$.
    \State $\alpha$: positive learning rate (scalar).
    \State $\uvec \gets \mathbf{0}, \Vtop \gets \mathbf{0}$.
            
    \item[{\bfseries procedure}] \textsc{UpdateStep}($\theta$, $g$, $\Vtop$, $\uvec$) \label{procedure:update}
    \State $g_1 \gets \Vtop \left( \Vtop^T g \right)$, \quad $g_2 \gets g - \Vtop \left( \Vtop^T g \right)$
    \State $d_1 \gets - \alpha \Vtop \left( \left( \Vtop^T g_1 \right) \odot \uvec^T \right)$
    \State $d_b \gets $ BaseOptStep($g_2$)
    \State $d_2 \gets d_b - \Vtop \left( \Vtop^T d_b \right)$
    \State $\theta \gets \theta + d_1 + d_2$
    \State \textbf{return} $\theta$

    \item[{\bfseries procedure}] \textsc{Optimize}($f$, $\theta_0$) \label{procedure:optimize}
    \State $t \gets 0$
    \While{$\theta_t$ not converged}
        \State $g_t \gets \nabla f (\theta_t)$
        \If{$t >= W$ and $(t - W) \mod T = 0$}
            \State $\ltop,  \Vtop \gets$ ESE($f, \theta_t, k, \ell$)
            \State $\uvec \gets 1 / \abs*{\ltop}$
        \EndIf
        \State $\theta_{t+1} \gets$ UpdateStep($\theta_t$, $g_t$, $\Vtop$, $\uvec$)
        \State $t \gets t+1$
    \EndWhile
\end{algorithmic}
\end{algorithm}

\subsection{Preconditioner Analysis} \label{appendix:proof_lemma_diagonal_preconditioner}

We start by proving Lemma~\ref{lemma:diagonal_preconditioner} from \S~\ref{sec:diagonal-prec}, and continue by analysing a special case in which the eigenvectors of $H$ are aligned to the axes of the Euclidean space.

\begin{proof}
In case we apply FOSI on an optimizer that uses an inverse diagonal preconditioner s.t. $d_b=-\eta\diag{\qvec} g_2$, then:
\begin{align*}
    d_1 &= - \alpha \Vtop \left( \left( \Vtop^T g_1 \right) \odot \uvec \right)
    = - \alpha \Vtop \left( \left( \underbrace{ \Vtop^T \Vtop }_{I} \left( \Vtop^T g \right) \right) \odot \uvec \right)
    = - \alpha \Vtop \diag{\uvec} \Vtop^T g, \\
    d_2 &= d_b - \Vtop \left( \Vtop^T d_b \right)
    = \left( I - \Vtop \Vtop^T \right) d_b
    = - \eta \left( I - \Vtop \Vtop^T \right) \diag{\qvec} g_2 \\
    &= - \eta \left( I - \Vtop \Vtop^T \right) \diag{\qvec} \left( g - \Vtop \left( \Vtop^T g \right) \right)
    = -\eta \left( I - \Vtop \Vtop^T \right) \diag{\qvec} \left( I - \Vtop \Vtop^T \right) g.
\end{align*}

By assigning these forms of $d_1$ and $d_2$ in the update step $\theta_{t + 1} = \theta_t + d_1 + d_2$, we obtain that the update step is of the form $\theta_{t + 1} = \theta_t - P^{-1} g$ and the inverse preconditioner is:
\begin{equation} \label{eq:diagonal_preconditioner_simple}
    P^{-1} = \alpha \Vtop \diag{\uvec} \Vtop^T + \eta \left( I - \Vtop \Vtop^T \right) \diag{\qvec} \left( I - \Vtop \Vtop^T \right). 
\end{equation}
Note that
\begin{equation} \label{eq:v_diag_u_v_transpose}
    \Vtop \diag{\uvec} \Vtop^T = V \diag{[\uvec, \mathbf{0}_{n-k-\ell}]} V^T.
\end{equation}
Similarly, and using the fact that $V$ is an orthonormal matrix ($V$ is an orthogonal basis) and hence $V V^T = I$:
\begin{align} \label{eq:i_minus_v_v_transpose}
    I - \Vtop \Vtop^T &= VIV^T - V \diag{[\mathbf{1}_{k+\ell}, \mathbf{0}_{n-k-\ell}]} V^T
    = V \left( I - \diag{[\mathbf{1}_{k+\ell}, \mathbf{0}_{n-k-\ell}]} \right) V^T \nonumber\\
    &= V \diag{[\mathbf{0}_{k+\ell}, \mathbf{1}_{n-k-\ell}]} V^T.
\end{align}
By assigning \eqref{eq:v_diag_u_v_transpose} and \eqref{eq:i_minus_v_v_transpose} in \eqref{eq:diagonal_preconditioner_simple} we obtain:
\begin{align*}
    P^{-1} =& \alpha V \diag{[\uvec, \mathbf{0}_{n-k-\ell}]} V^T + \eta V \diag{[\mathbf{0}_{k+\ell}, \mathbf{1}_{n-k-\ell}]} V^T\diag{\qvec} V \diag{[\mathbf{0}_{k+\ell}, \mathbf{1}_{n-k-\ell}]} V^T \\
    =& V \bigr[ \; \alpha \diag{[\uvec, \mathbf{0}_{n-k-\ell}]} + \eta \diag{[\mathbf{0}_{k+\ell}, \mathbf{1}_{n-k-\ell}]} V^T \diag{\qvec} V \diag{[\mathbf{0}_{k+\ell}, \mathbf{1}_{n-k-\ell}]} \; \bigr] V^T.
\end{align*}
This completes the proof of claim 1 of the Lemma.

Note that multiplying a diagonal matrix from the left of another matrix is equivalent to scaling each row of the later by the corresponding diagonal entry of the former, and similarly multiplying a diagonal matrix from from the right has the same effect on columns.
Therefore, the matrix 
\begin{align} \label{eq:matrix_b}
    B =& \diag{[\mathbf{0}_{k+\ell}, \mathbf{1}_{n-k-\ell}]} V^T \diag{\qvec} V \diag{[\mathbf{0}_{k+\ell}, \mathbf{1}_{n-k-\ell}]}
\end{align}
is a matrix whose first $k+\ell$ rows and first $k+\ell$ columns are 0.

Denote by $M$ the sub matrix of $B$ that contains the entries $i,j$ s.t. $i, j > k+\ell$.
Sine BaseOpt utilizes a positive definite (PD) preconditioner (as stated in the Lemma~\ref{lemma:diagonal_preconditioner}), i.e. $\diag{\qvec} \succ 0$, hence $V^T \diag{\qvec} V \succ 0$, and since $M$ is a trailing principal submatrix of $V^T \diag{\qvec} V$ it is PD~\citep[p.~349]{2017_matrix_algebra}.
$M$ is also symmetric, since $B$ is symmetric.
The diagonal matrix $\diag{\uvec}$ is also symmetric and PD.
Since a block diagonal matrix is PD if each diagonal block is PD~\citep{gallier2020schur} and symmetric if each block is symmetric, the block diagonal matrix
\begin{align} \label{eq:block_diagonal_matrix}
\alpha \diag{[\uvec, \mathbf{0}_{n-k-\ell}]} + \eta B = 
    \left(
    \begin{array}{cc}
    \alpha \diag{\uvec} & \mathbf{0} \\ 
    \mathbf{0} & \eta M \\
  \end{array}\right)
\end{align}
is PD and symmetric.

Since $V$ has full column and row rank, and the block diagonal matrix~\eqref{eq:block_diagonal_matrix} is PD, the inverse preconditioner
\begin{align*}
    P^{-1} &= 
    V 
    \left(
    \begin{array}{cc}
    \alpha \diag{\uvec} & \mathbf{0} \\ 
    \mathbf{0} & \eta M \\
  \end{array}\right)
    V^T
\end{align*}
is PD~\citep[p.~113]{2017_matrix_algebra}.
It is also symmetric, as \eqref{eq:block_diagonal_matrix} is symmetric.

Finally, using the fact that the inverse of a PD and symmetric matrix is PD and symmetric, we conclude that the preconditioner $P$ is symmetric and PD.
This completes the proof of claim 2 of the Lemma.

We assign the above $P^{-1}$ in $P^{-1}H$ to obtain the effective Hessian:
\begin{align*}
    P^{-1}H =&
    V 
    \left(
    \begin{array}{cc}
    \alpha \diag{\uvec} & \mathbf{0} \\ 
    \mathbf{0} & \eta M \\
  \end{array}\right)
    V^T V \diag{[\ltop, \lbottom]} V^T \\
    =& V 
    \left(
    \begin{array}{cc}
    \alpha \diag{\uvec} \diag{\ltop} & \mathbf{0} \\ 
    \mathbf{0} & \eta M \diag{\lbottom} \\
  \end{array}\right)
     V^T \\
    =& V 
    \left(
    \begin{array}{cc}
    \alpha \diag{\mathbf{1}_{k+\ell}} & \mathbf{0} \\ 
    \mathbf{0} & \eta M \diag{\lbottom} \\
  \end{array}\right)
     V^T.
\end{align*}
We obtained a partial eigendecomposition of the effective Hessian, which indicates that there are at least $k+\ell$ repetitions of the eigenvalues with value $\alpha$ and its corresponding eigenvectors are $\Vtop$'s eigenvectors.
This completes the proof of claim 3 of the Lemma.
\end{proof}

In the special case in which the eigenvectors of $H$ are aligned to the axes of the Euclidean space $\mathbb{R}^n$ (i.e. $H$ is diagonal), $V$ is a permutation matrix (has exactly one entry of 1 in each row and each column and 0s elsewhere).
Note that $I \diag{\qvec} I^T$ is an eigendecomposition of $\diag{\qvec}$.
Let $\mathcal{P}$ be the permutation of $I$'s columns, such that $\mathcal{P}(I)=V$.
Then $V \diag{ \mathcal{P}(\qvec) } V^T$ is also an eigendecomposition of $\diag{\qvec}$, i.e:
\begin{equation*}
    \diag{\qvec} = V \diag{ \mathcal{P} (\qvec) } V^T.
\end{equation*}
Therefore,
\begin{equation*}
    V^T \diag{\qvec} V = V^T V \diag{\mathcal{P} (\qvec)} V^T V = \diag{\mathcal{P} (\qvec)},
\end{equation*}
and $M$ is the $n - k - \ell$ trailing principal submatrix of $ \diag{\mathcal{P} (\qvec)} $.
In other words, the diagonal of $M$ contains the last $n - k - \ell$ entries of the vector $\mathcal{P}(\qvec)$.
By Lemma~\ref{lemma:diagonal_preconditioner}, $\alpha$ is an eigenvalue of $P^{-1} H$ with $k + \ell$ repetitions.
From the analysis of $M$ we obtain that the remaining $n - k - \ell$ eigenvalues of $P^{-1} H$ are:
$\eta \mathcal{P}(\qvec)_{k + \ell + 1}\lambda_{k+1} , \myDots, \eta \mathcal{P}(\qvec)_n \lambda_{n - \ell}$.
If $\qvec$ is a good approximation to the Hessian diagonal, then these eigenvalues should be all close to $\eta$ since each $\mathcal{P}(\qvec)_i$ is an approximation to the inverse of $\lambda_{i-\ell}$.

This is an optimal case, since the two optimization problems defined on the two subspaces have condition number of 1, which enable fast convergence.
However, this is a very special case and the Hessian in most optimization problems is not diagonal.
Moreover, even in this case, which is ideal for a diagonal preconditioner, FOSI provides benefit, since it solves $f_1$ with Newton's method, which obtains an ideal effective condition number over $\Vtop$, and provides the base optimizer with $f_2$, which is defined on a smaller subspace $\Vbottom$, hence can be viewed as of smaller dimensionality than $f$.

\subsection{Identity Preconditioner} \label{appendix:proof_diagonal_preconditioner}

Here we formalize the claims in \S~\ref{sec:identity_preconditioner}.

\begin{lemma} \label{lemma:identity_preconditioner}
Under the same assumption as in Lemma~\ref{lemma:diagonal_preconditioner},
with BaseOpt that utilizes a scaled identity inverse preconditioner $\eta I$ for some learning rate $\eta > 0$:
\begin{enumerate}[noitemsep,nosep,leftmargin=*]
    \item FOSI's resulting inverse preconditioner is
    $ P^{-1} = V \diag{[\alpha \uvec, \eta \mathbf{1}_{n-k-\ell}]} V^T$.
    \item The preconditioner $P$ is symmetric and PD.
    \item $\alpha$ is an eigenvalue of the effective Hessian $P^{-1}H$, and $\Vtop$’s columns are in the eigenspace of $\alpha$.
    In addition, the entries of the vector $\eta \lbottom$ are eigenvalues of $P^{-1}H$ and their corresponding eigenvectors are $\Vbottom$'s columns.
\end{enumerate}
\end{lemma}

\begin{proof}
The proof immediately follows from Lemma~\ref{lemma:diagonal_preconditioner} by replacing $\diag{\qvec}$ with $I$.

By replacing $\diag{\qvec}$ with $I$ in \eqref{eq:matrix_b}, we obtain 
\begin{equation*}
    B = \diag{[\mathbf{0}_{k+\ell}, \mathbf{1}_{n-k-\ell}]} V^T I V \diag{[\mathbf{0}_{k+\ell}, \mathbf{1}_{n-k-\ell}]} = \diag{[\mathbf{0}_{k+\ell}, \mathbf{1}_{n-k-\ell}]}.
\end{equation*}
Hence, $M = \diag{\mathbf{1}_{n-k-\ell}}$.
Assigning this $M$ in $P^{-1}$ given by Lemma~\ref{lemma:diagonal_preconditioner} obtains:
\begin{align*}
    P^{-1} &= 
    V 
    \left(
    \begin{array}{cc}
    \alpha \diag{\uvec} & \mathbf{0} \\ 
    \mathbf{0} & \eta \diag{\mathbf{1}_{n-k-\ell}} \\
  \end{array}\right)
    V^T = V \diag{[\alpha \uvec, \eta \diag{\mathbf{1}_{n-k-\ell}}]} V^T,
\end{align*}
which completes the proof of claim 1 of the Lemma.

$P^{-1}$ is diagonal matrix with positive diagonal entries, and therefore symmetric and PD.
Its inverse, $P$, is also symmetric and PD, which completes the proof of claim 2 of the Lemma.

We assign the above $P^{-1}$ in $P^{-1}H$ to obtain the effective Hessian:
\begin{align*}
P^{-1}H =
    V \diag{[\alpha \uvec, \eta \mathbf{1}_{n-k-\ell}]} V^T V \diag{[\ltop, \lbottom]} V^T
    = V \diag{[\alpha \mathbf{1}_{k+\ell}, \eta \lbottom]} V^T.
\end{align*}
We obtained an eigendecomposition of the effective Hessian, which implies there are $k+\ell$ repetitions of the eigenvalues with value $\alpha$ and their corresponding eigenvectors are $\Vtop$'s columns,
and the entries of $\eta \lbottom$ are eigenvalues and their corresponding eigenvectors are $\Vbottom$'s columns.
This completes the proof of claim 3 of the Lemma.
\end{proof}

The following Lemma states the conditions in which the effective condition number is smaller than the original condition number when applying FOSI with a base optimizer that utilizes an identity preconditioner.

\begin{lemma} \label{lemma:identity_preconditioner_improved_cn}
Under the same assumption as in Lemma~\ref{lemma:identity_preconditioner},
denote the effective condition number induced by BaseOpt by $\kappa$ and the effective condition number induced by FOSI using BaseOpt by $\Tilde{\kappa}$.
Then, $\Tilde{\kappa} \leq \kappa$ in the following cases:

\begin{enumerate}
    \item $ \alpha < \eta \lambda_{n-\ell}$ and $  \frac{\eta \lambda_{k+1}}{\alpha} \le \frac{\lambda_1}{\lambda_n}$

    \item $\eta \lambda_{n-\ell} \le \alpha \le \eta \lambda_{k+1}$

    \item $\eta \lambda_{k+1} < \alpha$ and $\frac{\alpha}{\eta \lambda_{n-\ell}} \le \frac{\lambda_1}{\lambda_n}$
\end{enumerate}
\end{lemma}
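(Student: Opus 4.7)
The plan is to read off the eigenvalues of the effective Hessian $P^{-1}H$ directly from Lemma~\ref{lemma:identity_preconditioner}, then express $\tilde\kappa$ as the ratio of the largest to the smallest among them. By Lemma~\ref{lemma:identity_preconditioner}, the $n$ eigenvalues of $P^{-1}H$ are $\alpha$ (with multiplicity $k+\ell$) together with $\eta\lambda_{k+1},\ldots,\eta\lambda_{n-\ell}$. Since $f$ is convex, all $\lambda_i>0$, so the maximum over the $\eta\lbottom$ block is $\eta\lambda_{k+1}$ and the minimum is $\eta\lambda_{n-\ell}$. Thus
\[
\tilde\kappa \;=\; \frac{\max\{\alpha,\,\eta\lambda_{k+1}\}}{\min\{\alpha,\,\eta\lambda_{n-\ell}\}},
\qquad
\kappa \;=\; \frac{\lambda_1}{\lambda_n},
\]
where the latter follows because BaseOpt's effective Hessian is simply $\eta H$, so its extreme eigenvalues are $\eta\lambda_1$ and $\eta\lambda_n$, and the factor $\eta$ cancels in the condition number.

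The proof then splits into exactly the three cases listed, corresponding to the location of $\alpha$ relative to the interval $[\eta\lambda_{n-\ell},\eta\lambda_{k+1}]$. In case~1, $\alpha<\eta\lambda_{n-\ell}\le\eta\lambda_{k+1}$ so the numerator is $\eta\lambda_{k+1}$ and the denominator is $\alpha$, giving $\tilde\kappa=\eta\lambda_{k+1}/\alpha$, and the stated hypothesis $\eta\lambda_{k+1}/\alpha\le\lambda_1/\lambda_n$ is exactly $\tilde\kappa\le\kappa$. In case~3, $\alpha>\eta\lambda_{k+1}\ge\eta\lambda_{n-\ell}$, so symmetrically $\tilde\kappa=\alpha/(\eta\lambda_{n-\ell})$ and the stated hypothesis again matches.

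Case~2 is the only one where no side condition is needed, and the argument is slightly different: here $\alpha\in[\eta\lambda_{n-\ell},\eta\lambda_{k+1}]$, so $\tilde\kappa=\eta\lambda_{k+1}/(\eta\lambda_{n-\ell})=\lambda_{k+1}/\lambda_{n-\ell}$. The inequality $\tilde\kappa\le\kappa$ then follows automatically from $\lambda_{k+1}\le\lambda_1$ and $\lambda_{n-\ell}\ge\lambda_n$, which are immediate from the ordering $\lambda_1>\cdots>\lambda_n$ and the fact that $k,\ell\ge 0$.

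There is no real obstacle here: once Lemma~\ref{lemma:identity_preconditioner} pins down the spectrum of $P^{-1}H$ and the spectrum of $\eta H$ is obvious, the entire claim reduces to taking maxima and minima over a finite set and performing the three case checks. The only points that require minor care are using convexity to guarantee $\lambda_n>0$ (so that the ratio $\lambda_1/\lambda_n$ genuinely is the condition number rather than a signed ratio), and noting that the $\eta$ cancels in $\kappa$ so that the comparison is between $\tilde\kappa$ and $\lambda_1/\lambda_n$ rather than $\eta\lambda_1/(\eta\lambda_n)$.
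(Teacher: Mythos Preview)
Your proposal is correct and follows essentially the same approach as the paper: both compute $\kappa=\lambda_1/\lambda_n$ from the effective Hessian $\eta H$, invoke Lemma~\ref{lemma:identity_preconditioner} to list the eigenvalues of $P^{-1}H$ as $\alpha$ together with $\eta\lambda_{k+1},\ldots,\eta\lambda_{n-\ell}$, and then perform exactly the same three-way case split on the position of $\alpha$ relative to $[\eta\lambda_{n-\ell},\eta\lambda_{k+1}]$. Your explicit $\max/\min$ formulation for $\tilde\kappa$ is a slightly cleaner packaging of the same computation, and your remark about needing $\lambda_n>0$ is a valid technical point the paper leaves implicit.
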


\begin{proof}

The identity preconditioner does not affect the condition number, so we have $\kappa = \lambda_1 / \lambda_n$.
As stated in claim 2 of Lemma~\ref{lemma:identity_preconditioner}, when using FOSI the distinct eigenvalues of the effective Hessian are $\alpha, \eta \lambda_{k+1}, \myDots, \eta \lambda_{n-\ell}$.
There are now three distinct ranges for $\alpha$ which affect $\Tilde{\kappa}$:
\begin{enumerate}
    \item $ \alpha < \eta \lambda_{n-\ell}$.
    In this case, the smallest eigenvalue of $P^{-1} H$ is $\alpha$ and the largest is $\eta \lambda_{k+1}$, which leads to $\Tilde{\kappa} = \eta \lambda_{k+1} / \alpha$;
    therefore, $\Tilde{\kappa} \leq \kappa \iff \frac{\eta \lambda_{k+1}}{\alpha} \le \frac{\lambda_1}{\lambda_n}$.

    \item $ \eta \lambda_{n-\ell} \le \alpha \le \eta \lambda_{k+1}$. In this case, the smallest eigenvalue of $P^{-1} H$ is $\eta \lambda_{n-\ell}$ and the largest is $\eta \lambda_{k+1}$;
    therefore, $\Tilde{\kappa} = \frac{\lambda_{k+1}}{\lambda_{n-\ell}}$ and $\Tilde{\kappa} \leq \kappa \iff \frac{\lambda_{k+1}}{\lambda_{n-\ell}} \le \frac{\lambda_1}{\lambda_n}$. Sine $\frac{\lambda_{k+1}}{\lambda_{n-\ell}} < \frac{\lambda_1}{\lambda_n}$ is true then $\Tilde{\kappa} < \kappa$.

    \item $\eta \lambda_{k+1} < \alpha$.
    In this case the smallest eigenvalue of $P^{-1} H$ is $\eta \lambda_{n-\ell}$ and the largest is $\alpha$;
    therefore, $\Tilde{\kappa} = \frac{\alpha}{\eta \lambda_{n-\ell}}$ and $\Tilde{\kappa} \leq \kappa \iff \frac{\alpha}{\eta \lambda_{n-\ell}} \le \frac{\lambda_1}{\lambda_n}$.
\end{enumerate}
\end{proof}

While Lemma~\ref{lemma:identity_preconditioner_improved_cn} provides the conditions in which FOSI improves the condition number, as discussed in \S~\ref{sec:identity_preconditioner}, FOSI is able to accelerate the convergence of the optimization process even when it does not improve the condition number.

\begin{figure}
\centering
{\includegraphics[width=0.4\columnwidth]{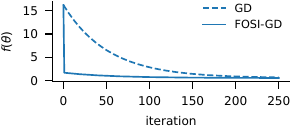}}
\caption{
    Learning curves of GD and FOSI for the minimization of the quadratic function $f(\theta) = 0.5 \theta^T H \theta$, with $\theta \in \mathbb{R}^{100}$.
    $H$'s eigenvectors are a random orthogonal basis, $\eta=0.001, \lambda_1=10, \lambda_{10} = 9, \lambda_n = 0.01$, and FOSI runs with $k=9, \ell = 0, \alpha=1$.
    While FOSI's effective condition number is larger than the original one, it converges much faster than the base optimizer.
}
\label{fig:quadratic_random_orthogonal_basis_gd}
\end{figure}

To show this phenomenon, we use GD and FOSI with GD as a base optimizer to optimize the quadratic function $f(\theta) = 0.5 \theta^T H \theta, \theta \in \mathbb{R}^{100}$.
We draw a random orthonormal basis for $f$'s Hessian, $H$, and set its eigenvalues as follows: $\lambda_1, \myDots, \lambda_{10}$ are equally spaced in the range $[9, 10]$ and $\lambda_{11}, \myDots, \lambda_{100}$ are equally spaced in the range $[0.01, 0.1]$.
We used the learning rate $\eta = 0.001$ and run FOSI with $k=9, \ell=0, \alpha=1$.
For this setting we have $\lambda_1=10, \lambda_{10}=9, \lambda_{100}=0.01$ and none of the conditions in Lemma~\ref{lemma:identity_preconditioner_improved_cn} is satisfied.
Since $\eta \lambda_{10} < 1$, the only candidate condition in Lemma~\ref{lemma:identity_preconditioner_improved_cn} is condition (3), however, in this case FOSI's effective condition number is $1 / (\eta \lambda_{100}) = 100000$, which is much larger than the original condition number of the problem, which is $\lambda_1 / \lambda_{100} = 1000$.
However, as shown in Figure~\ref{fig:quadratic_random_orthogonal_basis_gd}, FOSI converges much faster then GD.

This example emphasises our claim that the condition numbers to look at are those of $\Vtop$ and $\Vbottom$, which are smaller than the original one, and not the condition number of the entire Hessian.

\subsection{Convergence Guarantees in the Stochastic Setting} \label{appendix:proof_convergence_stochastic}

Our proof of Lemma~\ref{lemma:convergence_stochastic} relies on applying Theorem 2.8 from~\citet{doi:10.1137/15M1053141} to FOSI.
For clarity, we restate their theorem with our notations:

\begin{theorem}[Theorem 2.8, \citet{doi:10.1137/15M1053141}]
Suppose that the following assumptions hold for $\{ \theta_t \}$ generated by a stochastic quasi-Newton (SQN) method with batch size $m_t = m$ for all $t$:
\begin{enumerate}[label=(\roman*), noitemsep]
    \item \label{assumption:1} $f$ is continuously differentiable, $f(\theta)$ is lower bounded by a real number $f^{low}$ for any $\theta$, and $\nabla f$ is globally Lipschitz continuous with Lipschitz constant $L$.
    \item \label{assumption:2} For every iteration $t$, $\mathbb{E}_{x_t} [\nabla_{\theta} F(\theta_t, x_t)] = \nabla f(\theta_t)$ and $\mathbb{E}_{x_t} [\norm{\nabla_{\theta} F(\theta_t, x_t) - \nabla f(\theta_t)}^2] \le \sigma^2$, where $\sigma > 0$, $x_t$ for $t=1,2,\myDots$ are independent samples, and for a given $t$ the random variable $x_t$ is independent of $\{ \theta_i \}_{i=1}^{t}$.
    \item \label{assumption:3} There exist two positive constants, $\underline{z}, \bar{z}$, such that $\underline{z} I \preceq P^{-1}_t \preceq \bar{z} I$ for all $t$.
    \item \label{assumption:4} For any $t \ge 2$, the random variable $P^{-1}_t$ depends only on $b_1, b_2, ..., b_{t-1}$ (the random batch sampling in the $t-1$ previous iterations).
\end{enumerate}
We also assume that $\eta_t$ is chosen as $\eta_t = \frac{\underline{z}}{L \bar{z}^2} t^{-\beta}$ with constant $\beta \in (0.5, 1)$.
Then, for a given $\epsilon \in (0, 1)$, the number of iterations $N$ needed to obtain $\frac{1}{N} \sum_{t=1}^{N} \mathbb{E} \left[ \norm{ \nabla f (\theta_t) }^2 \right] \le \epsilon$ is $N = O \left( \epsilon^{-\frac{1}{1-\beta}} \right)$.

\end{theorem}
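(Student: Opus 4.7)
The approach is to cast FOSI as a stochastic quasi-Newton iteration of the form $\theta_{t+1} = \theta_t - \eta_t P_t^{-1} g_t$ and then verify the four hypotheses of Theorem 2.8 of Wang et al., from which the claimed iteration complexity $N = O(\epsilon^{-1/(1-\beta)})$ is immediate. So the whole proof is structured as ``cast into SQN form, then check (i)--(iv)''.

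First, I would use Lemma 3.1 to write one FOSI step as $\theta_{t+1} = \theta_t - \tilde P_t^{-1} g_t$, where $\tilde P_t^{-1}$ is the SPD block-diagonal matrix given there. Pulling the base-optimizer step size $\eta = \eta_t$ out as the SQN learning rate and rescaling the top block's constant accordingly gives $\tilde P_t^{-1} = \eta_t P_t^{-1}$ with $P_t^{-1}$ symmetric positive definite. Assumptions (i) and (ii) of Theorem 2.8 match assumptions 1 and 2 of Lemma 4.3 verbatim once this identification is made, so there is nothing to check beyond noting the correspondence.

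The main content is verifying assumption (iii), namely that there exist $\underline z, \bar z > 0$, independent of $t$, with $\underline z I \preceq P_t^{-1} \preceq \bar z I$. Using the block form from Lemma 3.1, the eigenvalues of $P_t^{-1}$ are (up to absorbing $\eta_t$) the entries of the vector $u$ together with the eigenvalues of the principal submatrix $M$ of $V^T \operatorname{diag}(q) V$. The entries of $u$ are $1/(|\hat\lambda_i| + \varepsilon)$, and $|\hat\lambda_i| \le \|\nabla^2 F(\theta,x)\| \le z$ by assumption 3, so $u_i \in [1/(z+\varepsilon),\, 1/\varepsilon]$. The Cauchy interlacing theorem bounds the eigenvalues of $M$ between the smallest and largest eigenvalues of $\operatorname{diag}(q)$, which are positive (BaseOpt's preconditioner is PD) and upper bounded by $z$ (assumption 3). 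Combining these gives uniform positive bounds $\underline z, \bar z$ independent of $t$.

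For assumption (iv) I would use the stale-spectrum variant discussed in Sections 3.5 and 3.8: the ESE call at time $t$ is performed on some $f^i$ with $i \le t-1$, and for any standard BaseOpt the diagonal preconditioner $\operatorname{diag}(q_t)$ is a deterministic function of past stochastic gradients $\{g_1, \ldots, g_{t-1}\}$; hence $P_t^{-1}$ depends only on $b_1, \ldots, b_{t-1}$. Finally, setting $\eta_t = \underline z /(L \bar z^2) \, t^{-\beta}$ with $\beta \in (0.5, 1)$ and invoking Theorem 2.8 yields the complexity bound.

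The main obstacle is the uniform lower bound in (iii): assumption 3 of the lemma only gives an upper bound on the entries of $\operatorname{diag}(q)$, whereas PD alone is not enough to bound the eigenvalues of $M$ away from zero uniformly in $t$. I expect to close this either by invoking the standard positive $\varepsilon$-safeguard implicit in any practical adaptive BaseOpt (so that $q_i \ge \varepsilon' > 0$ uniformly) or by strengthening assumption 3 with the corresponding lower bound on $q_i$; either route gives the needed $\underline z > 0$ and the rest of the argument then goes through unchanged.
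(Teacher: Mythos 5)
Your proposal reproduces the paper's argument almost exactly: the quoted statement is Theorem~2.8 of Wang et al., which the paper cites rather than proves, and the actual proof content --- casting FOSI's update into SQN form via Lemma~\ref{lemma:diagonal_preconditioner}, matching assumptions (i)--(ii) verbatim, bounding the spectrum of $P_t^{-1}$ through the entries of $u$ and the eigenvalue-interlacing bound on $M$, and enforcing (iv) by running ESE on a stale $f^i$ and delaying the base-optimizer state by one step --- is precisely what Appendix~\ref{appendix:proof_convergence_stochastic} does to prove Lemma~\ref{lemma:convergence_stochastic}. The one slip is directional: assumption~3 upper-bounds the diagonal entries of BaseOpt's \emph{preconditioner}, which therefore lower-bounds the entries of the inverse preconditioner $q$ by $1/z$; what the $\varepsilon$-safeguard must supply is the \emph{upper} bound $q_i \le 1/\varepsilon$ (and likewise $u_i \le 1/\varepsilon$), not the lower bound you flag as the main obstacle. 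With that inversion corrected your argument coincides with the paper's, yielding $\underline{z} = 1/z$ and $\bar{z} = 1/\varepsilon$ and the stated complexity.
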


We now prove Lemma~\ref{lemma:convergence_stochastic}.

\begin{proof}

First, we need to bring FOSI's inverse preconditioner $P^{-1}$ to the standard SQN form stated in~\citet{doi:10.1137/15M1053141}, and then prove that all the assumption of Theorem 2.8 are satisfied.

To bring FOSI's $P^{-1}$ from Lemma~\ref{lemma:convergence_stochastic} to the standard SQN form, with update step $\theta_{t+1} = \theta_t - \eta_t P^{-1}_t g_t$, let $\alpha = \eta = -\eta_t$.
After extracting $-\eta_t$, $P^{-1}$ is then given by
\begin{align*}
    P^{-1} &= 
    V 
    \left(
    \begin{array}{cc}
    \diag{\uvec} & \mathbf{0} \\ 
    \mathbf{0} & M \\
  \end{array}\right)
    V^T,
\end{align*}
where $M$ is the trailing $n - k - \ell$ principal submatrix of $V^T \diag{\qvec} V$ .

Theorem 2.8 is comprised of four assumptions, where its first two assumption, \ref{assumption:1} and \ref{assumption:2}, are satisfied by the first two assumptions in Lemma~\ref{lemma:convergence_stochastic}.
Assumption \ref{assumption:4} requires that for each iteration $t$, FOSI's inverse preconditioner $P^{-1}$ depends only on $b_1, b_2, \dots, b_{t-1}$.
This could be easily satisfied by ensuring that the ESE procedure is called with $f^i$ for $i < t$ and that BaseOptStep is called after the gradient step (switching the 4th and 5th steps in the UpdateStep() procedure), while using $d_b$ from the last iteration in the update step.
This has no impact on our analysis of FOSI in \S~\ref{sec:diagonal-prec}.

To establish assumption \ref{assumption:3} we examine $P^{-1}$ structure.
Given that the Hessian is symmetric PSD, it follows that for every $\theta$ and $x$, the norm of the Hessian $\nabla^2 F(\theta, x)$ is equivalent to its largest absolute eigenvalue.
Therefore, from assumption \ref{step:assumption-bounded-hessian} that $\norm{\nabla^2 F(\theta, x)} \le z$, we have that the largest absolute eigenvalue is bounded above by $z$.
Since $f^t(\theta) = \frac{1}{m} \sum_{i=1}^{m} F(\theta, x_i)$, then the largest absolute eigenvalue of $\nabla^2 f^t (\theta_t)$ is also bounded by $z$.
In addition, it should be noted that the ESE procedure provides eigenvalue estimates that are within bounds of the true extreme eigenvalues of the Hessian of $f^t$~\citep{dorsselaer_2001}.
Thus, it follows that each entry of $\abs*{\ltop}$ is bounded above by $z$, which in turn implies that each entry of $\uvec$ is bounded below by $1/z$.
Moreover, the entries of $\uvec$ are also bounded above by $1/\epsilon$ for $0 < \epsilon < 1$ ($\epsilon$ is added to entries of $\abs*{\ltop}$ that are smaller than $\epsilon$).

Given that BaseOpt utilizes a PD preconditioner (assumption \ref{step:assumption-bounded-hessian}), the entries of $\qvec$ are upper bounded by some positive constant $1/\epsilon$ (assuming w.l.o.g that this is the same constant that upper bounds $\uvec$).
Moreover, given that the eigenvalues of BaseOpt's preconditioner are bounded from above by $z$, the values of $\qvec$ are lower bounded by $1/z$.
Since the matrix $M$ is a trailing principal submatrix of $V^T \diag{\qvec} V$, its eigenvalues are bounded by the eigenvalues of $V^T \diag{\qvec} V$ (\emph{eigenvalue interlacing theorem}), which are simply the entries of $\qvec$.

Finally, since $V$ is a rotation matrix, a multiplication from the left by $V$ and from the right by $V^T$ has no impact on the eigenvalues, which implies that $P^{-1}$'s eigenvalues are the entries of $\uvec$ and the eigenvalues of $M$.
Hence, for every iteration $t$, $P^{-1}$'s eigenvalues are lower bounded by $\underline{z} = 1/z$ and upper bounded by $\bar{z} = 1/\epsilon$.

After establishing assumptions \ref{assumption:1}--\ref{assumption:4}, we complete the proof by applying Theorem 2.8 from~\citet{doi:10.1137/15M1053141}.
\end{proof}

\subsection{Automatic Learning Rate Scaling} \label{appendix:automatic_lr_scaling}

This section provides details regarding the automatic learning rate scaling technique, presented in \S\ref{sec:automatic_lr_scaling}.

Let the base optimizer, BaseOpt, be an optimizer with a closed-form expression of its optimal learning rate in the quadratic setting, $\eta$ be a tuned learning rate for BaseOpt over $f$, and $\eta^*$ be the optimal learning rate of BaseOpt over a quadratic approximation $\tilde{f}$ of $f$ at iteration $t$.
In general, $\eta^*$ is not known since first-order optimizers do not evaluate the extreme eigenvalues.
Implicitly, $\eta$ is a scaled version of $\eta^*$, i.e., $\eta = s \eta^*$ for some unknown positive scaling factor $s$, usually $s < 1$.

FOSI creates a quadratic subproblem, $f_2$, with a lower condition number compared to $\tilde{f}$ and solves it using BaseOpt.
Therefore, we propose using $\eta_2 = s \eta^*_2$, a scaled version of the optimal learning rate of $f_2$ with the same scaling factor $s$ of $\eta$, instead of simply using $\eta$.
the ESE procedure provides $\lambda_1, \lambda_n, \lambda_k, \lambda_{n-\ell+1}$, which allows FOSI to automatically adjust $\eta$ to $\eta_2$, given the relevant closed-form expression for the optimal learning rate. 
Specifically, $\eta_2 = \eta (\eta^*_2 / \eta^*)$, with $\eta^*$ obtainable from $\lambda_1$ and $\lambda_n$, and an approximate value for $\eta^*_2$ obtainable from $\lambda_k$ and $\lambda{n-\ell+1}$.

\subsection{ESE Approximation Error} \label{appendix:ese_approx_error}
Using Newton's method in non-quadratic settings within a subspace obtained from the ESE procedure increases the likelihood of divergence due to inaccuracies in the direction of the steps taken. These inaccuracies stem from the imprecise approximation of Hessian eigenvalues and eigenvectors through the ESE procedure.

To mitigate this, we employ a scaled Newton's method, with a learning rate of $0 < \alpha \le 1$, for function $f_1$, as an alternative to the traditional Newton's method which enforces $\alpha=1$.

We also avoid issues of numerical accuracy in the ESE procedure by performing full orthogonalization w.r.t all previous vectors in each Lanczos iteration~\citep{meurant2006lanczos}. 
Moreover, we use float64 for the ESE procedure computations (only); we retain the original precision for training, storing parameters, and other computations.

To mitigate Lanczos divergence caused by a plateau-like initial point~\citep{pmlr-v151-orvieto22a}, we use warmup iterations before the first ESE call.

Finally, using ESE in a stochastic setting could theoretically result in unsuitable $\Vtop$ and $\ltop$ when called with $f^i$ which misrepresent $f$.
Techniques to address this include using larger batch size only for the ESE procedure, or averaging the results obtained from different $f^i$s.
We leave the investigation of such techniques to future work.

In practice, our experiments on a variety of DNNs in \S\ref{sec:evaluation}, using an arbitrary $f^j$ on ESE calls, demonstrate that FOSI is robust and substantially improves convergence.

\subsection{Runtime Analysis} \label{appendix:runtime}
FOSI’s runtime differs from that of the base optimizer due to additional computations in each update step and its calls to the ESE procedure.

Let $\tau_1$ be the average latency per iteration of the base optimizers, $\tau_2$ be the average latency per iteration of FOSI that does not include a call to the ESE procedure (as if $T=\infty$), and $\tau_3$ be the average latency of the ESE procedure.
Given that the base optimizer and FOSI are run for $T$ iterations, the latency of the base optimizer is $T \tau_1$, and that of FOSI is $T \tau_2 + \tau_3$, as the ESE procedure is called once every $T$ iterations.
The parameter $T$ impacts FOSI's runtime relative to the base optimizer.
A small $T$ may result in faster convergence in terms of iterations, since $\Vtop$ and $\ltop$ are more accurate; however, it also implies longer runtime.\footnote{
We do note, however, that in some settings, such as distributed settings in which network bandwidth is limited, using fewer iterations is preferred, even at the cost of additional runtime per iteration.
In future work we plan to run the ESE procedure on the CPU in the background in the effort of saving this extra runtime altogether.}
On the other hand, using a large $T$ may result in divergence due to inaccurate estimates of $\Vtop$ and $\ltop$.
Since the improvement in convergence rate, for any given $T$, is not known in advance, the parameter $T$ should be set such that FOSI's runtime it at most $\rho$ times the base optimizer runtime, for a user define $\rho > 1$.
To ensure that, we require $\rho T \tau_1 = T \tau_2 + \tau_3$, which implies\footnote{It should be noted that this calculation does not take into account the extra evaluation steps during the training process, which has identical runtime with and without FOSI; hence, FOSI's actual runtime is even closer to that of the base optimizer.}
\begin{equation} \label{eq:T_expression}
    T = \tau_3 / (\rho \tau_1 - \tau_2).
\end{equation}

The average latency of FOSI's extra computations in an update step (lines 7, 8, and 10 in Algorithm~\ref{algo:fosi}), denoted by $\tau_2 - \tau_1$, includes three matrix-vector products and some vector additions, which have a computational complexity of $O(n(k+\ell))$.
For large and complex functions, the latency of these extra computations is negligible when compared to the computation of the gradient (line 18 in Algorithm~\ref{algo:fosi})\footnote{For DNNs, gradient computation can be parallelized over the samples in a batch, however, it must be executed serially for each individual sample. In contrast, operations such as matrix-vector multiplication and vector addition can be efficiently parallelized.}, thus leading to the approximation of $ \tau_1 \approxeq \tau_2 $.
Furthermore, $\tau_3$ can be approximated by $2 m \tau_1$, where $m = \max \{ 4 (k + \ell), 2 \ln n \}$ is the number of Lanczos iterations, since each Lanczos iteration is dominated by the Hessian-vector product operation which takes approximately two gradient evaluations (see \S~\ref{sec:lanczos}).
By incorporating these approximations into equation (\ref{eq:T_expression}), we can derive a formula for $T$ which does not require any measurements:
\begin{equation*}
    T = 2m/(\rho -1).
\end{equation*}
Note that this formula is not accurate for small or simple functions, where the gradient can be computed quickly, and the additional computations are not negligible in comparison.
In such cases, $\tau_1$, $\tau_2$, and $\tau_3$ can be evaluated by running a small number of iterations, and $T$ can be computed using equation (\ref{eq:T_expression}) based on these evaluations.

\section{Evaluation}

\subsection{Deep Neural Networks} \label{appendix:additional_evaluation_results}

\begin{figure}
    \centering
    \subfloat[{LR}]{
    \label{sub_fig:mnist_fosi_vs_base_train_val}
    \includegraphics[width=0.5\columnwidth]{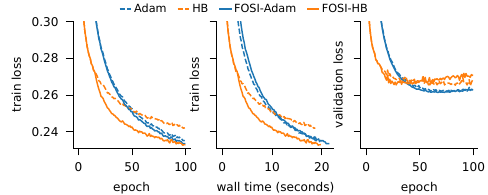}
    }
    \subfloat[{AE}]{
    \label{sub_fig:autoencoder_cifar10_fosi_vs_base_train_val_128-Ball}
    \includegraphics[width=0.5\columnwidth]{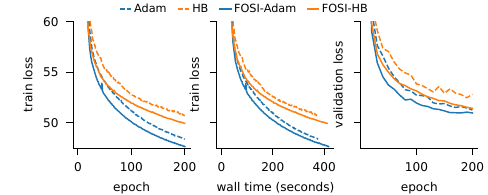}
    } \\
    \subfloat[{TL}]{
    \label{sub_fig:transfer_learning_fosi_vs_base_train_val}
    \includegraphics[width=0.5\columnwidth]{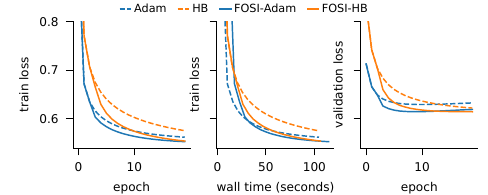}
    }
    \subfloat[{LM}]{
    \label{sub_fig:rnn_shakespeare_fosi_vs_base_train_val}
    \includegraphics[width=0.5\columnwidth]{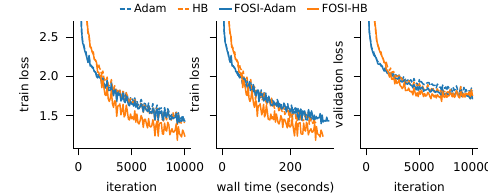}
    }
    \caption{
    Learning curves of different optimizers for different DNN training tasks.
    In most cases FOSI obtains faster convergence than the base optimizers across epochs (left) and across wall time (middle).
    Since FOSI accelerates convergence, it also leads to an earlier overfitting point when the base optimizer
    has a tendency to overfit, as can be observed in the LR
    validation loss (right).
    }
    \label{fig:fosi_vs_base_train_val}
\end{figure}

Figure~\ref{fig:fosi_vs_base_train_val} shows the learning curves of FOSI and the base optimizers for different DNN training tasks:
(1) training logistic regression model on the MNIST dataset, (2) training autoencoder on the CIFAR-10 dataset, (3) transfer learning task in which we train the last layer of trained ResNet-18 on the CIFAR-10 dataset, and (4) training character-level language model with a recurrent network on the Tiny Shakespeare dataset.
FOSI improves over the base optimizers in most cases.
While FOSI improvement over Adam is less significant than its improvements over Heavy-Ball, there are tasks for which Heavy-Ball performs better than Adam since it generalizes better.

\begin{table}[ht]
\centering
\caption{Comparison of wall time (in seconds) for each base optimizer and FOSI at the same train loss, which is the minimal train loss of the base Optimizer.
A lower wall time is preferable as it indicates the optimizer reaches the best loss at a faster rate.}
\label{table:train_wall_time}
\begin{tabular}{c r r r r}
\toprule
    Task & \multicolumn{1}{c}{HB} & \multicolumn{1}{c}{FOSI-HB} & \multicolumn{1}{c}{Adam} & \multicolumn{1}{c}{FOSI-Adam} \\
    \midrule
    \arrayrulecolor{lightgray}
    
    AC & 6845 & \textbf{3599} & \textbf{6042} & 6825 \\
    \cmidrule(r){1-5}
    LM & 255 & \textbf{177} & 255 & \textbf{233} \\
    \cmidrule(r){1-5}
    AE & 372 & \textbf{322} & 375 & \textbf{338} \\
    \cmidrule(r){1-5}
    TL & 103 & \textbf{58} & 104 & \textbf{71} \\
    \cmidrule(r){1-5}
    LR & 18 & \textbf{10} & \textbf{19} & 19 \\
    
\arrayrulecolor{black}
\bottomrule
\end{tabular}
\end{table}

Table~\ref{table:train_wall_time} shows the time it takes for both the base optimizer and FOSI to reach the same train loss, which is the lowest train loss of the base optimizer.
On average, FOSI achieves the same loss over the training set in 78\% of the wall time compared to the base optimizers.

\begin{figure}
    \centering
    \subfloat[{LR}]{
    \label{sub_fig:logistic_regression_mnist_second_order_sec_ord}
    \includegraphics[width=0.5\columnwidth]{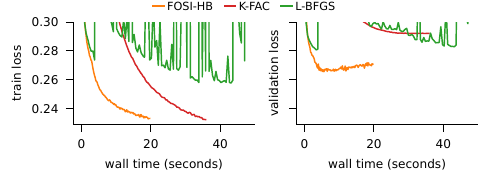}
    }
    \subfloat[{AE}]{
    \label{sub_fig:autoencoder_cifar10_second_order_sec_ord}
    \includegraphics[width=0.5\columnwidth]{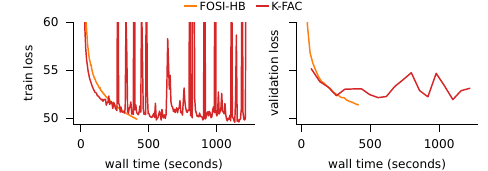}
    }\\
    \subfloat[{TL}]{
    \label{sub_fig:transfer_learning_cifar10_second_order_sec_ord}
    \includegraphics[width=0.5\columnwidth]{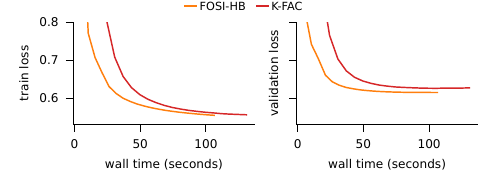}
    } 
    \subfloat[{LM}]{
    \label{sub_fig:rnn_shakespeare_second_order_sec_ord}
    \includegraphics[width=0.5\columnwidth]{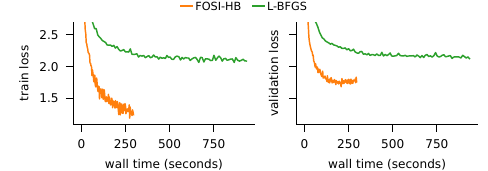}
    }\\
    \subfloat[{AC}]{
    \label{sub_fig:mobilenet_audioset_second_order_sec_ord}
    \includegraphics[width=0.5\columnwidth]{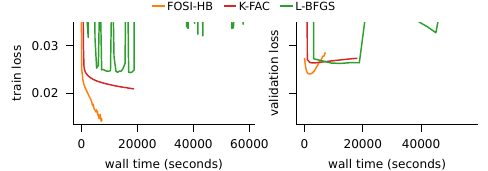}
    }
    \caption{
    Learning curves of FOSI-BH, K-FAC, and L-BFGS for the DNN training tasks AE, TL, LM, and AC.
    In the TL figure (top-right), L-BFGS (L=40) was omitted due to its poor performance, resulting in a significantly larger loss than other optimizers and 8x slower wall time.
    Similarly, in the AE figure (top-left), L-BFGS (L<=100) was excluded due to its divergence in the first epoch.
    In the LM figure (bottom-left), K-FAC was omitted due to integration issues with RNN.
    Across all tasks, FOSI demonstrated faster convergence to a lower validation loss compared to K-FAC and L-BFGS.
    }
    \label{fig:fosi_vs_kfac_and_lbfgs}
\end{figure}

Figure~\ref{fig:fosi_vs_kfac_and_lbfgs} shows the learning curves of FOSI-HB, K-FAC and L-BFGS.
In all cases, FOSI converges faster and to a lower validation loss than K-FAC and L-BFGS.
Specifically:
\begin{itemize}[nosep, leftmargin=*]
    \item LR: K-FAC converges quickly but overfits dramatically, resulting in much higher validation loss than FOSI.
    L-BFGS converges much more slowly than the other approaches and to a much higher validation loss.
    \item TL: Both K-FAC  and L-BFGS converge slower than FOSI and result in higher validation loss.
    \item AE: K-FAC converges quickly but is noisy and leads to a large validation loss (52.1 compared to 51.4 for FOSI), while L-BFGS diverges quickly after the first epoch, even with large values of $L$.
    \item LM: we could not get the K-FAC implementation to work on this RNN model (this is a known issue with K-FAC and RNN~\citep{martens2018kronecker}).
    L-BFGS converges more slowly, and to a much higher loss.
    \item AC: K-FAC converges slower than FOSI and shows substantial overfitting, while L-BFGS does not converge.
\end{itemize}

\subsection{Quadratic Functions}

\subsubsection{Dimensionality and Eigenspectrum} \label{appendix:quadratic_jax_kappa_zeta_learning_curves}

Here we provide the full details regarding the experiment in \S~\ref{experiment:quadratic_diagonal_dominanc}, where we explore the effect of both the condition number and the diagonally dominance of the function's Hessian on the different optimizers.
To do so, we define a set of functions $f_{b, \zeta}(\theta) = 0.5 \theta^T H_{b, \zeta} \theta$ for $b \in \{ 1.1, \myDots, 1.17 \}$ and $\zeta \in \{ 0, \myDots, 100 \}$, $\theta \in \mathbb{R}^{100}$,
where $b$ and $\zeta$ are parameters that define the Hessian $H_{b, \zeta}$.
The parameter $b$ determines the eigenvalues of $H_{b, \zeta}$: $\forall i \in \{1, \myDots, 100\}$ $\lambda_i = 0.001 b^i$.
The parameter $\zeta$ determines the number of rows in $H_{b, \zeta}$ that are not dominated by the diagonal element, i.e., the part of $H_{b, \zeta}$ which is not diagonally dominant.

To construct $H_{b, \zeta}$, we start from a diagonal matrix whose diagonal contains the eigenvalues according to $b$.
We then replace a square block on the diagonal of this matrix with a PD square block of dimensions $\zeta \times \zeta$ whose eigenvalues are taken from the original block diagonal and its eigenvectors are some random orthogonal basis.
The result is a symmetric PD block diagonal $H_{b, \zeta}$ with one block of size $\zeta \times \zeta$ and another diagonal block, and the eigenvalues are set by $b$.

An important observation is, that for a specific $b$ value, $b_1$, and two different $\zeta$ values, $\zeta_1$ and $\zeta_2$, the Hessians $H_{b_1, \zeta_1}$ and $H_{b_2, \zeta_2}$ share the same eigenvalues and their eigenvectors are differ by a simple rotation.
The starting point $\theta_0$ in all the experiments is the same and it is rotated by a rotation matrix that is $H_{b, \zeta}$'s eigenvectors.
As a result, for all the experiments with the same $b$, the starting values $f_{b, \zeta} (\theta_0)$ are identical.

\begin{figure}
    \centering
    \subfloat[{Adam (right) vs. FOSI-Adam (left)}]{
    \label{sub_fig:quadratic_jax_kappa_zeta_Adam}
    \includegraphics[width=0.33\textwidth]{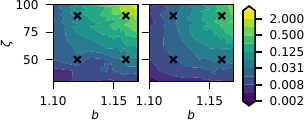}
    }
    \subfloat[{HB (right) vs. FOSI-HB (left)}]{
    \label{sub_fig:quadratic_jax_kappa_zeta_Heavy-Ball}
    \includegraphics[width=0.33\textwidth]{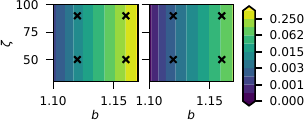}
    }
    \subfloat[{GD (right) vs. FOSI-GD (left)}]{
    \label{sub_fig:quadratic_jax_kappa_zeta_GD}
    \includegraphics[width=0.33\textwidth]{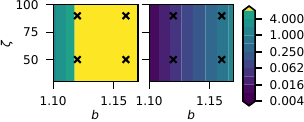}
    }
    \caption{
    Each $(b, \zeta)$ combination in each sub figure is the value of different $f_{b, \zeta}$ after 200 iterations of the optimizer.
    FOSI improves over the base optimizer for every function.
    Learning curves for four functions, indicated by black x markers, can be found in Figure~\ref{fig:quadratic_jax_kappa_zeta_learning_curves}.
    }
    \label{fig:quadratic_jax_kappa_zeta_cntour}
\end{figure}

Figure~\ref{fig:quadratic_jax_kappa_zeta_cntour} shows $f_{b, \zeta}$ at the optimal point after 200 iterations of the optimizers for different $b$ and $\zeta$ values.
For a specific $b$ value, rotations of the coordinate system (changes in $\zeta$) have no impact on GD and HB, as seen by the vertical lines with the same value for different $\zeta$ values.
Their performance deteriorates for larger $b$ values (more ill-conditioned problems).
When applying FOSI, the new maximal eigenvalues of two functions with similar $\zeta$ and different $b$ are still differ by an order of magnitude, which leads to the differences in FOSI's performance along the $b$ axis.
Adam's performance is negatively affected for large $b$ and $\zeta$ values.
FOSI improves over the base optimizer in all cases.

\begin{figure}
\centering
{\includegraphics[width=0.6\columnwidth]{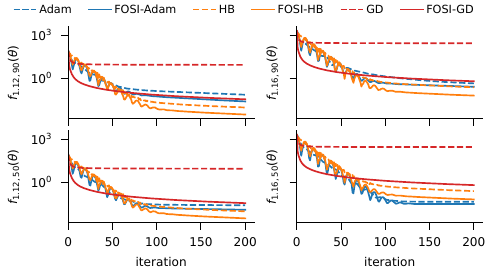}}
\caption{
    Learning curves of four specific $f_{b,\zeta}$ functions.
    Each x mark in Figure~\ref{fig:quadratic_jax_kappa_zeta_cntour} is the final value of the corresponding learning curve here, i.e. the value of $f$ at iteration 200.  
}
\label{fig:quadratic_jax_kappa_zeta_learning_curves}
\end{figure}

Figure~\ref{fig:quadratic_jax_kappa_zeta_learning_curves} shows the learning curves of the optimizers for four specific $f_{b, \zeta}$ functions:
$f_{1.12, 50}, f_{1.12, 90}, f_{1.16, 50}, f_{1.16, 90}$.
The black x marks in each sub figure of Figure~\ref{fig:quadratic_jax_kappa_zeta_cntour} are the last value of these learning curves.
For both functions with $b=1.12$ the learning curves of GD and Heavy-Ball (as well as FOSI with these base optimizers) are identical, as they are only differ by a rotation, and similarly for $b=1.16$.
However, for the same $\zeta$, these optimizers convergence is much slower for larges $b$ value.
FOSI implicitly reduces the maximal eigenvalue in both functions, but the new two maximal eigenvalues still differ by an order of magnitude, which leads to the differences in FOSI's performance (as opposed to the first experiment on quadratic functions).
Adam is negatively impacted when $\zeta$ is increased.
In this experiment.
For smaller $\zeta$ values its performance is not impacted by the change in $b$ and it is able to converge to the same value even for functions with larger curvature.

\subsubsection{Learning Rate and Momentum} \label{appendix:learning_rate}
In the last experiment on quadratic functions, we use each optimizer to optimize the function $f_{1.12, 90}$ multiple times, with different learning rates $\eta \in \{1e-5, 10\}$.
FOSI-HB and FOSI-GD were run with both $c=1$ (no scaling) and $c=\infty$ (no clipping).
We repeated the experiment twice.
In the first version we used a fixed momentum parameter 0.9 ($\beta$ for HB and $\beta_1$ for Adam).
In the second version we find the best momentum parameter $\in [0.7,1)$ for each $\eta$.

\begin{figure}
    \centering
    \subfloat[{Fixed $\beta$ and $\beta_1$ (0.9)}]{
    \label{sub_fig:quadratic_jax_kappa_zeta_lr_single_func}
    \includegraphics[width=0.55\columnwidth]{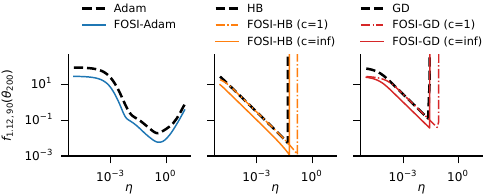}
    } \hfill
    \subfloat[{$\beta$ and $\beta_1$ tuned per $\eta$}]{
    \label{sub_fig:quadratic_jax_kappa_zeta_lr_momentum_single_func-Ball}
    \includegraphics[width=0.40\columnwidth]{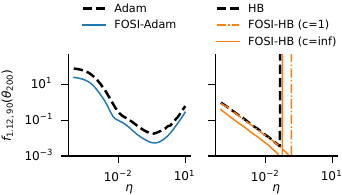}
    }
    \caption{
    $f_{1.12, 90}$ after 200 iterations with different learning rates.
    Each curve is for a different optimizer, while each point in the curve is the final $f$ value after 200 iterations with a specific learning rate.
    Left: using a fixed momentum value of 0.9, $\beta$ for HB and $\beta_1$ for Adam.
    Right: the best momentum $\in [0.7,1)$ for each $\eta$.
    FOSI improves over the base optimizer even when using the optimal hyperparameter set.
    }
    \label{fig:quadratic_jax_kappa_zeta_lr_single_func}
\end{figure}

Figure~\ref{fig:quadratic_jax_kappa_zeta_lr_single_func} shows the results after 200 iterations for every optimizer and learning rate $\eta$. 
FOSI improves over Adam for all learning rates. 
For GD and HB, with $c=1$, FOSI expands the range of $\eta$ values for convergence, changes the optimal $\eta$, and leads to superior results after the same number of iterations. 
With $c=\infty$, FOSI improves over the base optimizer for all $\eta$ values, but the range of $\eta$ values for convergence stays similar.
Moreover, FOSI's improvement over the base optimizer when using the optimal set of hyperparameters is similar to the improvement for a fixed momentum parameters.
Similar trends were observed when repeating the experiment for other $f_{b, \zeta}$ functions.

\end{document}